\theoremstyle{plain}
\newtheorem{theorem}{Theorem}[section]
\newtheorem{proposition}[theorem]{Proposition}
\theoremstyle{definition}
\theoremstyle{remark}
\DeclareRobustCommand{\mb}[1]{\boldsymbol{#1}}
\renewcommand{\mid}{~\vert~}
\newcommand{\mbSigma}{\mb{\Sigma}}
\newcommand{\dif}{\mathop{}\!\mathrm{d}}
\newcommand{\E}{\mathbb{E}}
\newcommand{\bbI}{\mathbb{I}}
\newcommand{\bbR}{\mathbb{R}}
\newcommand{\cD}{\mathcal{D}}
\newcommand{\cL}{\mathcal{L}}
\newcommand{\cN}{\mathcal{N}}
\newcommand{\cO}{\mathcal{O}}
\newcommand{\reals}{\mathbb{R}}
\title{A Bayesian Nonparametric Perspective on Mahalanobis \\ Distance for Out of Distribution Detection}
\author{%
  \name Randolph W. Linderman \email randolph.linderman@duke.edu \\
  \addr Electrical and Computer Engineering Department\\
  Duke University
  \AND
  \name Noah Cowan \email ncowan@stanford.edu \\
  \addr Statistics Department\\
  Stanford University
  \AND
  \name Yiran Chen \email{yiran.chen@duke.edu} \\
  \addr Electrical and Computer Engineering Department\\
  Duke University
  \AND
  \name Scott W. Linderman \email{scott.linderman@stanford.edu} \\
  \addr Statistics Department and Wu Tsai Neurosciences Institute\\
  Stanford University
}
\let\oldappendix\appendix
\renewcommand{\appendix}{%
  \oldappendix
  \crefalias{section}{appendix}
}
\begin{document}

\maketitle

\begin{abstract}
Bayesian nonparametric methods are naturally suited to the problem of out-of-distribution (OOD) detection.
However, these techniques have largely been eschewed in favor of simpler methods based on distances between pre-trained or learned embeddings of data points.
Here we show a formal relationship between Bayesian nonparametric models and the relative Mahalanobis distance score (RMDS), a commonly used method for OOD detection.
Building on this connection, we propose Bayesian nonparametric mixture models with hierarchical priors that generalize the RMDS.
We evaluate these models on the OpenOOD detection benchmark and show that Bayesian nonparametric methods can improve upon existing OOD methods, especially in regimes where training classes differ in their covariance structure and where there are relatively few data points per class.
\end{abstract}

\section{Introduction}
\label{sec:intro}
Machine learning systems inevitably face data that deviate from their training distributions.
Generally, this data is either sparsely labeled or wholly unsupervised.
Faced with such a dynamic environment, an intelligent system must accurately detect outliers and respond appropriately.
This capability is the subject of modern research on out-of-distribution~(OOD) detection~\citep{HendrycksD17}, anomaly detection~\cite{chandola2009anomaly}, and open-set recognition~\citep{scheirer12openset}.

Outlier detection is one of the oldest problems in statistics~\citep{anscombe1960rejection}, and there are well-established methods for tackling this canonical problem.
For example, Bayesian nonparametric methods offer a coherent, probabilistic framework for estimating the probability that a data point belongs to a new cluster~\citep{ferguson1973bayesian,antoniak74dpmm,lo1984class,Sethuraman94, maceachern1994estimating, neal00}.
Here, we use Dirichlet Process Mixture Models (DPMMs) to fit a generative model to the training data.
Under this model, OOD detection reduces to a straightforward computation of the probability that a data point belongs to a novel class.

Care must be taken with Bayesian nonparametric methods, however.
Modern machine learning systems are often built on foundation models that have been trained on massive datasets~\citep{dosovitskiy20vit,caron21dino,oquab23dinov2,darcet23needreg,chen20simclr,chen20simclrv2}.
These models yield feature embeddings of data points that can be used for several downstream tasks, but the embeddings are high-dimensional.
When fitting a Gaussian DPMM, for example, we must implicitly estimate the covariance of embeddings within and across classes, which presents both computational and statistical challenges.
We propose a hierarchical model that adaptively shares statistical strength across classes when estimating these high-dimensional covariance matrices.

Generative classifiers like these have been largely eschewed in favor of simpler distance metrics, like the relative Mahalanobis distance score~\citep[RMDS;][]{ren21rmds}.
Here, we show both theoretically and empirically that RMDS is a close approximation to the outlier probability under a Gaussian DPMM with tied covariance matrices, connecting this widely-used approach to inference in a Bayesian nonparametric model.
From this perspective, we propose hierarchical models that generalize RMDS by relaxing the assumption of equal covariance matrices across classes.

We find that hierarchical Gaussian DPMMs offer a well-grounded and practically competitive approach to OOD detection.
Section~\ref{sec:relatedwork} reviews related work, and Section~\ref{sec:background} covers important background on DPMMs.
We make a theoretical connection between RMDS and DPMMs in Section~\ref{sec:theory}.
This connection motivates our use of hierarchical models for estimating the high-dimensional covariance matrices in DPMMs --- including a novel ``coupled diagonal covariance'' model --- which we describe in \Cref{sec:models} and evaluate in~\Cref{sec:experiments}.
We compare these models to existing baselines on synthetic datasets as well as the OpenOOD benchmark to characterize the regimes where Bayesian nonparameteric yield improved OOD performance
\footnote{The implementation for all models and experiments is available at \url{https://github.com/rwl93/bnp4ood}.}.

\section{Related Work}
\label{sec:relatedwork}
The OOD detection task for image classification is the binary decision that predicts if an image belongs to a class that is outside of the set of known classes that have been observed during training.
This task has been widely studied and many solutions have been proposed.
For example, some approaches alter the architecture or objective of a classifier~\citep{TackJ20,HuangR21,wei2022mitigating,linderman23}, and others exploit auxiliary outlier datasets~\citep{HendrycksD19,zhang2021mixture}.
While training and outlier exposure based methods are important research directions for OOD detection the focus of this work is post-hoc methods.
We refer readers to the available high quality OOD detection surveys to learn more about these methods~\citep{yang2022openood,zhang23openood15}.
Our approach is related to a class of post-hoc methods including max softmax probability~\citep[MSP;][]{HendrycksD17}, temperature-scaled MSP~\citep{guo17tempscale}, ODIN~\citep{LiangS18}, energy-based OOD~\cite{LiuW20}, the Mahalanobis distance score~\citep[MDS;][]{lee18mds}, and the Relative MDS~\citep{ren21rmds}, which derive OOD scores from embeddings or activations of a pre-trained network.

Recently, \citet{zhang23openood15} proposed a set of Near and Far OOD benchmarks, as well as a leaderboard named OpenOOD to facilitate comparison across methods. The OpenOOD benchmarks found (1) that post-hoc methods are more scalable to large datasets, (2) there is no method that is best on all datasets, and (3) methods are sensitive to which model was used for embedding.
The best performing OpenOOD methods for vision transformer (ViT) feature embeddings are the MDS and RMDS.
The relative Mahalanobis distance score was inspired by earlier work by~\citet{ren2019likelihood} that addressed the poor performance of the OOD performance with density estimation methods.
\citet{sun2022out} propose to relax some of the assumptions of Mahalanobis distance methods by using the negative $k$-th nearest neighbor distances instead.
We will show that the relative Mahalanobis distance score (RMDS) is similar to scores derived from Bayesian nonparametric mixture models in \Cref{sec:theory}.

Bayesian nonparametric methods have previously been proposed for outlier detection and used in several applications. \citet{shotwell2011} proposed to detect outliers within datasets by partitioning data via a DPMM and identifying clusters containing a small number of samples as outliers.
\citet{varadarajan17} developed a method for detecting anomalous activity in video snippets by modeling object motion with DPMMs. Another line of work explored Dirichlet prior networks~\citep[DPN;][]{malinin2018predictive,malinin2019reverse} that explicitly model distributional uncertainty arising from dataset shift as a Dirichlet distribution over the categorical class probabilities.
More recently, \citet{Kim2024UnsupervisedOD} performed unsupervised anomaly detection through an ensemble of Gaussian DPMMs fit to random projections of a subset of datapoints.
Our work focuses on connecting DPMMs to post-hoc confidence scores and developing \textit{hierarchical} Gaussian DPMMs that share statistical strength across classes in order to estimate their high-dimensional covariance matrices.

\section{Background}
\label{sec:background}
We start with background on Dirichlet process mixture models (DPMMs) and the special case of a Gaussian DPMM with tied covariance.

\subsection{Dirichlet process mixture models}
\label{sec:dpmm}

Dirichlet process mixture models~\citep{lo1984class} are Bayesian nonparametric models for clustering and density estimation that allow for a countably infinite number of clusters.
There is always some probability that a new data point could come from a cluster that has never been seen before --- i.e., that the new point is an \textit{outlier}.

Let $\cD = \{x_n, y_n\}_{n=1}^N$ denote a set of training data points $x_n \in \reals^D$ and labels $y_n \in [K]$.
Likewise, let $\cD_k = \{x_n: y_n=k\}$ denote the subset of points assigned to cluster $k$, and let $N_k = |\cD_k|$ denote the number of such points.
Now consider a new, unlabeled data point $x$. Under a DPMM, its corresponding label $y$ has probability,
\begin{align}
    \label{eq:outlier_prob}
    p(y = k \mid x, \cD)
    &\propto
    \begin{cases}
    N_k \, p(x \mid \cD_k) &\text{if } k\in[K] \\
    \alpha \, p(x) &\text{if } k = K+1,
    \end{cases}
\end{align}
where the hyperparameter $\alpha \in \reals_+$ specifies the concentration of the Dirichlet process prior.

The first case captures the probability that the new point belongs to one of the training clusters (that it is an \textit{inlier}).
That probability depends on two factors:
1) the number of training data points in that cluster since, intuitively, larger clusters are more likely;
2) the \textit{posterior predictive probability}, which is obtained by integrating over the posterior distribution of cluster parameters, $\theta_k$,
\begin{align}
    p(x \mid \cD_k)
    &= \int p(x \mid \theta_k) \, p(\theta_k \mid \cD_k) \dif \theta_k \\
    \nonumber
    &\propto \int p(x \mid \theta_k) \, \bigg[\prod_{x_n \in \cD_k} p(x_n \mid \theta_k) \bigg] \, p(\theta_k) \dif \theta_k.
\end{align}
The second case of eq.~\eqref{eq:outlier_prob} captures the probability that the new point is an outlier.
It depends on the concentration $\alpha$ and the \textit{prior predictive probability} obtained by integrating over the prior distribution of cluster parameters,
\begin{align}
    p(x) &= \int p(x \mid \theta_k) \, p(\theta_k) \dif \theta_k.
\end{align}
For many models of interest, the posterior and prior predictive distributions have closed forms.

\subsection{Gaussian DPMM with Tied Covariance}
\label{sec:tied-cov}

For example, consider a Gaussian DPMM in which each cluster is parameterized by a mean and covariance,~${\theta_k = (\mu_k, \Sigma_k)}$.
Assume a conjugate prior for the mean,~$\mu_k \sim \mathcal{N}(\mu_0, \Sigma_0)$.
For now, assume that all clusters share the same covariance matrix, which we express through an atomic prior,~$\Sigma_k \sim \delta_\Sigma$.
The hyperparameters of the prior are $\eta = (\mu_0, \Sigma_0, \Sigma)$.

Under this Gaussian DPMM, the conditional distribution of a new data point's label is,
\begin{align}
    \label{eq:gaussian_dpmm_outlier_unnormalized}
    p(y = k \mid x, \cD)
    &\!\propto\!
    \begin{cases}
    N_k \, \mathcal{N}(x \mid \mu_k', \Sigma_k' + \Sigma) &\!\!\!\text{if } k\in[K] \\
    \alpha \, \mathcal{N}(x \mid \mu_0, \Sigma_0 + \Sigma) &\!\!\!\text{if } k = K+1.
    \end{cases}
\end{align}
where
\begin{align}
    \nonumber \mu_k' &= \Sigma_k' \left(\Sigma_0^{-1} \mu_0 +  N_k \Sigma^{-1} \bar{x}_k\right), \\
    \label{eq:gaussian_tied_postpred}
    \Sigma_k' &= \left( \Sigma_0^{-1} + N_k \Sigma^{-1} \right)^{-1},
\end{align}
and $\bar{x}_k = \tfrac{1}{N_k} \sum_{x_n \in \cD_k} x_n$ is the mean of the data points assigned to cluster~$k$.

The relative probability of theses cases is an intuitive measure of how likely a point is to be an outlier.
Indeed, the next section shows that the outlier probabilities from this Bayesian nonparametric model are closely related to another common outlier detection score.

\section{Theory: Connecting Relative Mahalanobis Distance and DPMMs}
\label{sec:theory}
Here we show that a widely used outlier detection method called the relative Mahalanobis distance score~\citep[RMDS;][]{ren21rmds} is closely related to the outlier probabilities obtained using a Gaussian DPMM with tied covariances. RMDS outputs a score, $C(x)$, where smaller values indicate that a data point $x$ is more likely to be an outlier.
The RMDS score of a new point $x$ is defined as follows,\footnote{We flip the sign of $\mathrm{RMD}_k(x)$ compared to~\citet{ren21rmds}, but account for it in the definition of $C(x)$ so that the resulting score is unchanged. Our presentation is in keeping with the definition of the MDS score~\citep{lee18mds}.}
\begin{align}
    \nonumber
    \mathrm{MD}_0(x) &= (x - \hat{\mu}_0)^\top \hat{\Sigma}_0^{-1} (x - \hat{\mu}_0) \\
    \nonumber
    \mathrm{MD}_k(x) &= (x - \hat{\mu}_k)^\top \hat{\Sigma}^{-1} (x - \hat{\mu}_k) \\
    \nonumber
    \mathrm{RMD}_k(x) &= \mathrm{MD}_0(x) - \mathrm{MD}_k(x)\\
    C(x) &= \max_k \; \mathrm{RMD}_k(x),
\end{align}
where $\mathrm{MD}_0(x)$ and $\mathrm{MD}_k(x)$ are squared Mahalanobis distances, $\hat{\mu}_0$ and $\hat{\Sigma}_0$ are the sample mean and covariance of the data, $\hat{\mu}_k$ is the sample mean of cluster $k$, and $\hat{\Sigma} = \tfrac{1}{N} \sum_{n=1}^N (x_n - \hat{\mu}_{y_n})(x_n - \hat{\mu}_{y_n})^\top$ is the sample within-class covariance.

\citet{ren21rmds} motivated this score in terms of log density ratios between a Gaussian distribution for each cluster and a Gaussian ``background'' model.
Specifically,
\begin{align}
    \label{eq:rmdk}
    \mathrm{RMD}_k(x) &= 2 \log \frac{\cN(x \mid \hat{\mu}_k, \hat{\Sigma})}{\cN(x \mid \hat{\mu}_0, \hat{\Sigma}_0)} + d,
\end{align}
where $d=\log |\hat{\Sigma}| - \log|\hat{\Sigma}_0|$ does not depend on $x$ or $k$.
Larger values of $\mathrm{RMD}_k(x)$ indicate that $x$ is more likely under cluster $k$ than under the background model.

The procedure for mapping $\mathrm{RMD}_k(x)$ values to the score~$C(x)$ is inherited from the Mahalanobis distance score~\citep[MDS;][]{lee18mds}. If the log density ratio is negative for all $k$, then the background model is more likely than \textit{all} of the existing clusters, and hence $x$ is likely to be an outlier.
Propositions~\ref{prop:rmds_dpmm} and~\ref{prop:rmds_tied_dpmm} show that a similar computation is at work in the outlier probabilities for DPMMs.

First, we show that the \emph{inlier} probabilities under a general DPMM (not necessarily Gaussian) can be expressed in terms of a quantity analogous to $C(x)$.
\begin{proposition}
\label{prop:rmds_dpmm}
The \emph{inlier} probability of a general DPMM with concentration $\alpha$ can be expressed as follows,
\begin{align}
    p(y \in [K] \mid x, \cD)
    &= \sigma(\widetilde{C}(x) - \log \nicefrac{\alpha}{\overline{N}})
\end{align}
where $\sigma(u) = (1 + e^{-u})^{-1}$ is the logistic function, ${\overline{N}=\tfrac{1}{K} \sum_k N_k}$ is the average cluster size,
and
\begin{align}
    \label{eq:Ctilde} \widetilde{C}(x) &= \log \sum_{k=1}^K e^{\lambda_k + \log \nicefrac{N_k}{\overline{N}}} \\
    \lambda_k &= \log \frac{p(x \mid \cD_k)}{p(x)}.
\end{align}
Here, $\lambda_k$ is the log density ratio of the posterior and prior predictive distributions from eq.~\eqref{eq:outlier_prob}.
\end{proposition}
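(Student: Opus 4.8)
The plan is to start from the unnormalized conditional in eq.~\eqref{eq:outlier_prob} and reduce the binary event $\{y \in [K]\}$ versus $\{y = K+1\}$ to a single logistic transform of a log-odds. First I would write the inlier probability explicitly as the normalized ratio
\begin{align}
    p(y \in [K] \mid x, \cD) = \frac{\sum_{k=1}^K N_k\, p(x \mid \cD_k)}{\sum_{k=1}^K N_k\, p(x \mid \cD_k) + \alpha\, p(x)},
\end{align}
using that the normalizing constant in eq.~\eqref{eq:outlier_prob} is exactly the sum of all $K+1$ unnormalized terms. Writing $A = \sum_{k=1}^K N_k\, p(x \mid \cD_k)$ for the inlier mass and $B = \alpha\, p(x)$ for the outlier mass, this ratio is $A/(A+B) = (1 + e^{-\log(A/B)})^{-1} = \sigma(\log(A/B))$. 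So the entire proof reduces to showing that $\log(A/B)$ equals $\widetilde{C}(x) - \log \nicefrac{\alpha}{\overline{N}}$.

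The second step is the algebraic identification of this log-odds. I would divide the numerator and denominator of $A/B$ by the prior predictive $p(x)$, which turns each ratio $p(x \mid \cD_k)/p(x)$ into $e^{\lambda_k}$ by the definition of $\lambda_k$, giving $\log(A/B) = \log \sum_{k=1}^K N_k e^{\lambda_k} - \log \alpha$. Then I would factor the average cluster size out of each term by writing $N_k = \overline{N} \cdot (N_k/\overline{N})$, so that
\begin{align}
    \log(A/B) = \log \overline{N} - \log \alpha + \log \sum_{k=1}^K e^{\lambda_k + \log \nicefrac{N_k}{\overline{N}}}.
\end{align}
The log-sum-exp on the right is exactly $\widetilde{C}(x)$ from eq.~\eqref{eq:Ctilde}, and $\log \overline{N} - \log \alpha = -\log \nicefrac{\alpha}{\overline{N}}$, which completes the identity and hence the proposition.

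This argument is essentially pure bookkeeping, so I do not expect a genuine obstacle; the only place to be careful is the factoring of $\overline{N}$, which is what makes the size term appear inside the softmax as $\log \nicefrac{N_k}{\overline{N}}$ rather than as a raw $\log N_k$, and correspondingly isolates the clean threshold $\log \nicefrac{\alpha}{\overline{N}}$ outside the softmax. The choice to normalize by $\overline{N}$ is cosmetic --- any constant would cancel between $\widetilde{C}(x)$ and the threshold --- but using the average cluster size is what makes $\widetilde{C}(x)$ directly comparable to the RMDS score $C(x)$ in the subsequent proposition, since under equal cluster sizes the $\log \nicefrac{N_k}{\overline{N}}$ terms vanish and $\widetilde{C}(x)$ becomes a pure softmax over the $\lambda_k$. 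I would remark on this to motivate why this particular form of the statement is the useful one.
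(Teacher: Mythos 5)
Your proposal is correct and follows essentially the same route as the paper: both reduce the inlier probability to the normalized ratio of inlier mass to total mass, recognize it as a sigmoid of the log-odds, and factor $\overline{N}$ out of the sum to isolate $\widetilde{C}(x)$ from the threshold $\log \nicefrac{\alpha}{\overline{N}}$. The only cosmetic difference is that the paper writes the inlier probability as one minus the outlier probability and invokes $1 - \sigma(-u) = \sigma(u)$, whereas you go directly via $A/(A+B) = \sigma(\log(A/B))$; the algebra is identical.
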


\begin{proof}
The inlier probability is one minus the outlier probability.
Normalizing the outlier probability in eq.~\eqref{eq:outlier_prob} and rearranging, we can write the inlier probability as,
\begin{align}
    \nonumber p(y \in [K] \mid x, \cD) &=
    1 - \frac{\alpha p(x)}{\alpha p(x) + \sum_{k=1}^K N_k p(x \mid \cD_k)} \\
    \nonumber &=
    1 - \left( 1 +  \sum_{k=1}^K \frac{\nicefrac{N_k}{\overline{N}} \, p(x \mid \cD_k)}{\nicefrac{\alpha}{\overline{N}} \, p(x)} \right)^{-1} \\
    \nonumber &= 1 - \left(1 + e^{\widetilde{C}(x) - \log \nicefrac{\alpha}{\overline{N}}}\right)^{-1} \\
    &= \sigma(\widetilde{C}(x) - \log \nicefrac{\alpha}{\overline{N}})
\end{align}
where $\widetilde{C}(x)$ is defined in eq.~\eqref{eq:Ctilde} and the last line follows from the fact that $1 - \sigma(-u) = \sigma(u)$.
\end{proof}

This proposition says that the log-odds of data point $x$ belonging to an existing cluster is the difference of a \textit{DPMM score},~$\widetilde{C}(x)$, which is analogous to the relative Mahalanobis score, and a \textit{threshold},~$\log \nicefrac{\alpha}{\overline{N}}$, which is tuned by the hyperparameter $\alpha$.

Next, we show that in certain regimes, the DPMM score from a Gaussian DPMM with tied covariance is almost perfectly correlated with the RMDS. Below, we define the relative covariance matrix $R = \hat\Sigma_0^{-1/2}\hat\Sigma\hat\Sigma_0^{-1/2}$ and note that as its operator norm $\kappa = \|R\|_{op}$ goes to $0$, then we intuitively have that $\hat\Sigma_0$ is growing larger with respect to $\hat\Sigma$.

\begin{proposition}
\label{prop:rmds_tied_dpmm}
Consider $\sigma^2$-sub-Gaussian data in $\mathbb{R}^D$ generated from $K$ clusters with equal size $N$, where each cluster $k$ has mean $\mu_k$ and common covariance $\Sigma$. If the cluster means are drawn from a Gaussian prior $\mathcal{N}(\hat\mu_0,\hat\Sigma_0)$, then for any $\epsilon, \delta > 0$ there exist $\kappa_0,  N_0$ such that if $\kappa \leq \kappa_0$ and $N \geq N_0$ then with probability at least $1-\epsilon$,
\begin{align*}
    |\tilde{C}(X) - \tfrac{1}{2}[C(X) - d]| &< \delta + \log K
\end{align*}
where $C(x)$ is the RMDS, $\tilde{C}(x)$ is the DPMM score from a Gaussian DPMM with tied covariance $\hat \Sigma$ and hyperparameters $(\hat\mu_0,\hat\Sigma_0,\hat\Sigma)$, and $d$ is an additive constant, as defined in eq.~\eqref{eq:rmdk}.
\end{proposition}

\begin{proof} We give a sketch here and refer the reader to the full proof in~\Cref{app:prop_2_proof}.
For each $k$, we decompose the difference, \(\lambda_k - \tfrac{1}{2}[\mathrm{RMD}_k(x)-d]\), and show that it is small.
The differences can be separated into terms that get smaller as $\kappa$ decreases,
\begin{align*}
   \bigl|\log|\hat\Sigma_0 + \hat\Sigma| - \log|\hat\Sigma_0|\bigr|,
   \quad
   (x - \hat\mu_0)^T\bigl[(\hat\Sigma_0 + \hat\Sigma)^{-1} - \hat\Sigma_0^{-1}\bigr](x - \hat\mu_0),
\end{align*}
plus two more terms that decrease as $N$ increases,
\begin{align*}
\bigl|\log|\Sigma_k' + \hat\Sigma| - \log|\hat\Sigma|\bigr|,
   \quad
   (x - \hat\mu_k)^T\hat\Sigma^{-1}(x - \hat\mu_k) - (x - \mu_k')^T(\Sigma_k + \hat\Sigma)^{-1}(x - \mu_k').
\end{align*}
We collect these four terms and show that each is small with high probability. Finally, since the DPMM score is given by a log-sum-exp,
which is $1$-Lipschitz in the $\ell_\infty$ norm, it follows that if each $\lambda_k$ is within $\delta$ of $\tfrac12[\mathrm{RMD}_k(x)-d]$, then $\tilde{C}(x)$ is within $\delta + \log K$ of $\max_k \tfrac{1}{2}[\mathrm{RMD}_k(x)-d] = \tfrac{1}{2}[C(x) - d]$.
\end{proof}

This proposition establishes the close correspondence between the relative Mahalanobis distance score and the log-odds that a point is an inlier under a Gaussian DPMM with tied covariance. Note that the $\log K$ factor in \Cref{prop:rmds_tied_dpmm} is irreducible due to the difference between the $\max$ used in RMDS and the smooth approximation used in DPMMs. We view this as a feature not a bug of the DPMMs: Since the $\log K$ factor only appears when the scores for different clusters are close to identical, the gap arises when the DPMM aggregates evidence across multiple equally plausible components rather than arbitrarily selecting a single cluster with RMDS.
In practice, we find that a close correspondence holds in the experiments below and in \Cref{app:rmds_dpmm_corr}.
This correspondence provides further support for using RMDS for outlier detection, beyond the original motivation in terms of log likelihood ratios.
However, from this perspective, we also recognize several natural generalizations of RMDS that could improve outlier detection through richer DPMMs.
We present three such generalizations below.

\section{Hierarchical Gaussian DPMMs}
\label{sec:models}

\def\fmwidth{2.75 in}
\begin{figure}
    \begin{center}
    \includegraphics[width=\fmwidth]{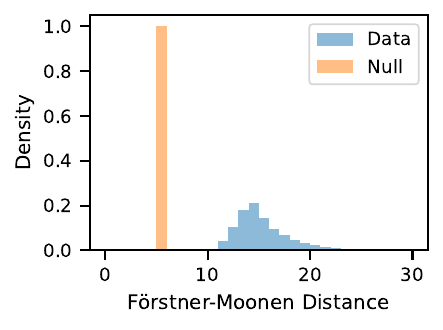}
    \caption{F\"orstner-Moonen distance between covariance matrices of the 1000 classes in the Imagenet-1k ViT-B-16 feature space (Data) versus 1000 samples of covariance matrices from the Wishart null distribution, $\mathrm{W}(\overline{N}, \hat{\Sigma}/\overline{N})$.
    See~\Cref{app:exploratory_details} for complete details.
    }
    \label{fig:classwise-forstner}
    \end{center}
\end{figure}

RMDS has proven to be a highly effective outlier detection method, but it assumes that all clusters share the same covariance.
This assumption helps avoid overfitting the covariance matrices for each class~\citep{ren21rmds}, but it is not always warranted.
\Cref{fig:classwise-forstner} shows a histogram of differences between empirical covariance matrices $\hat{\Sigma}_k$ and $\hat{\Sigma}_{k'}$ for all pairs of classes $(k,k')$ in the Imagenet-1K dataset, as measured by the F\"orstner-Moonen distance~\citep{forstner_metric_2003}.

These pairwise distances are systematically larger than what we would expect under a null distribution where the true covariance matrices are the same for all classes, and the empirical estimates differ solely due to sampling variability.
Complete details of this analysis are provided in~\Cref{app:exploratory_details}.
This analysis suggests that the covariance matrices are significantly different across classes and motivates a more flexible approach.

The connection between RMDS and Gaussian DPMMs established above suggests a natural way of relaxing the tied-covariance assumption without sacrificing statistical power:
Instead of estimating covariance matrices independently, we could infer them jointly under a hierarchical Bayesian model~\citep{gelman1995bayesian}.
With such a model, we can estimate separate covariance matrices for each cluster, while sharing information via a prior.
By tuning the strength of the prior, we can obtain the tied covariance model in one limit and a fully independent model in the other.
Finally, we can estimate these hierarchical prior parameters using a simple expectation-maximization algorithm that runs in a matter of minutes, even with large, high-dimensional datasets.

\subsection{Full Covariance Model}
\label{sec:hierarchical-cov}

First, we propose a hierarchical Gaussian DPMM with full covariance matrices and a conjugate prior.
The cluster parameters, $\theta_k = (\mu_k, \Sigma_k)$, are drawn from a conjugate, normal-inverse Wishart (NIW) prior,
\begin{equation}
    p(\theta_k) = \mathrm{IW}\big(\Sigma_k \mid \nu_0, (\nu_0 - D - 1) \Sigma_0 \big)
    \times \cN\big(\mu_k \mid \mu_0, \kappa_0^{-1} \Sigma_k \big),
\end{equation}
where $\mathrm{IW}$ denotes the inverse Wishart density. Under this parameterization, $\E[\Sigma_k] = \Sigma_0$ for $\nu_0 > D + 1$.
The hyperparameters of the prior are~$\eta = (\nu_0, \kappa_0, \mu_0, \Sigma_0)$.

The most important hyperparameters are $\nu_0$ and $\Sigma_0$, as they specify the prior on covariance matrices.
As $\nu_0 \to \infty$, the prior concentrates around its mean and we recover a tied covariance model. For small values of $\nu_0$, the hierarchical model shares little strength across clusters, and the covariance estimates are effectively independent.

We propose a simple approach to estimate these hyperparameters in~\Cref{app:em-hdpmm}. Briefly, we use empirical Bayes estimates for the prior mean and covariance, setting $\mu_0 = \hat{\mu}_0$ and $\Sigma_0 = \hat{\Sigma}$. We derive an expectation-maximization~(EM) algorithm to optimize $\nu_0$ and $\kappa_0$. Thanks to the conjugacy of the model, the E-step and the M-step for $\kappa_0$ can be computed in closed form. We leverage a generalized Newton method~\citep{minka2000beyond} to update the concentration hyperparameter, $\nu_0$, effectively learning the strength of the prior to maximize the marginal likelihood of the data.

Finally, the prior and posterior predictive distributions are  multivariate Student's t distributions with closed-form densities.
The log density ratios derived from these predictive distributions form the basis of the DPMM scores, $\widetilde{C}(x)$.

\subsection{Diagonal Covariance Model}

Even with the hierarchical prior, we find that the full covariance model can still overfit to high-dimensional embeddings.
Thus, we also consider a simplified version of the hierarchical Gaussian DPMM with diagonal covariance matrices.
Here, the cluser parameters are $\theta_k = \{\mu_{k,d}, \sigma_{k,d}^2\}_{d=1}^D$, and the conjugate prior is,
\begin{equation}
    p(\theta_k) = \prod_{d=1}^D \chi^{-2}(\sigma_{k,d}^2 \mid \nu_{0,d}, \sigma_{0,d}^2)
    \times \cN(\mu_{k,d} \mid \mu_{0,d}, \kappa_{0,d}^{-1} \sigma_{k,d}^2)
\end{equation}
where $\chi^{-2}$ is the scaled inverse chi-squared density.

In addition to having fewer parameters per cluster, another advantage of this model is that it allows for
different concentration hyperparameters for each dimension, $\nu_{0,d}$. We estimate the hyperparameters using a
procedure that closely parallels the full covariance model. Likewise, the prior and posterior predictive densities reduce to products of scalar Student's t densities, which are even more efficient to compute.
Complete details are in~\Cref{app:em-diag-hdpmm}.

\begin{figure}[!t]
    \begin{center}
    \includegraphics[width=6.5in]{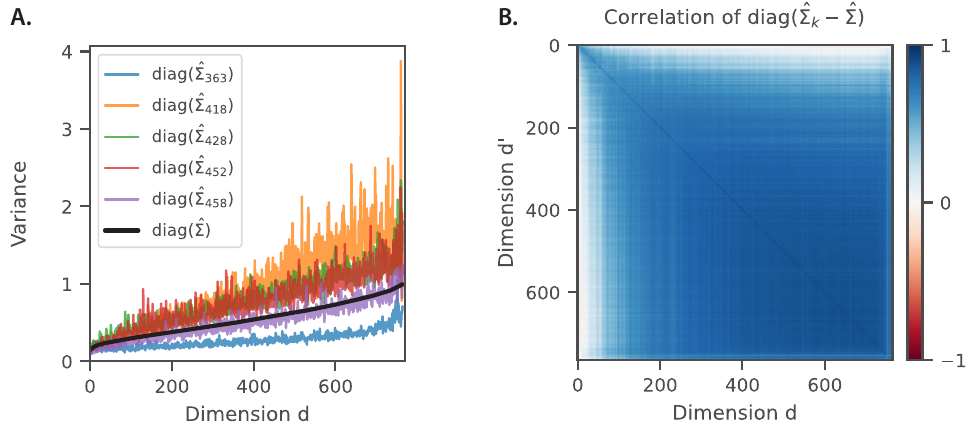}
    \caption{\textbf{A:} Diagonal of empirical covariance matrices,~$\mathrm{diag}(\hat{\Sigma}_k)$ for five randomly chosen clusters (colored lines) over dimensions. Compared to the diagonal of the average covariance matrix,~$\mathrm{diag}(\hat{\Sigma})$, individual clusters tend to have systematically larger or smaller variances than average.
    \textbf{B:} The correlation between dimensions of the deviation from the mean, $\hat{\Sigma}_k - \hat{\Sigma}$, of the diagonal components. The strong positive correlations between all but the first few dimensions indicates that the relationship observed in \textbf{A} is consistent across all clusters.}
    \label{fig:cov-analysis}
    \end{center}
    \vspace{-1em}
\end{figure}

\subsection{Coupled Diagonal Covariance Model}

The diagonal covariance model dramatically reduces the number of parameters per cluster, but it also makes a strong assumption about the per-class covariance matrices.
Specifically, it assumes the variances, $\sigma_{k,d}^2$, are conditionally independent across dimensions.
\Cref{fig:cov-analysis} suggests that this is not the case: the diagonals of the empirical covariance matrices, $\hat{\Sigma}_k$, tend to be systematically larger
or smaller than those of the average covariance matrix, $\hat{\Sigma}$.
This analysis suggests that $\sigma_{k,d}^2$ are not independent; rather, if $\sigma_{k,d}^2$ is larger than average, then $\sigma_{k,d'}^2$ is likely to be larger as well.

We propose a novel, \emph{coupled} diagonal covariance model to capture these effects. Specifically, we introduce a scale factor $\gamma_k \in \bbR_+$ that scales the variances for class~$k$ compared to the average.
In this model, the cluster parameters are $\theta_k = (\gamma_k, \{\mu_{k,d}, \sigma_{k,d}^2\}_{d=1}^D)$, and the prior is,
\begin{equation}
    p(\theta_k) =
    \chi^2(\gamma_k \mid \alpha_0)
    \prod_{d=1}^D \bigg[\chi^{-2}(\sigma_{k,d}^2 \mid \nu_{0,d}, \gamma_k \sigma_{0,d}^2)
    \times \cN(\mu_{k,d} \mid \mu_{0,d}, \kappa_{0,d}^{-1} \sigma_{k,d}^2) \bigg]
\end{equation}
where $\gamma_k$ scales the means of $\sigma_{k,d}^2$ for all dimensions $d$ to capture the correlations seen above.

\def\synthwidth{6.5 in}

\begin{figure*}[t]
    \captionsetup{aboveskip=.5em}
    \begin{center}
        \includegraphics[width=\synthwidth]{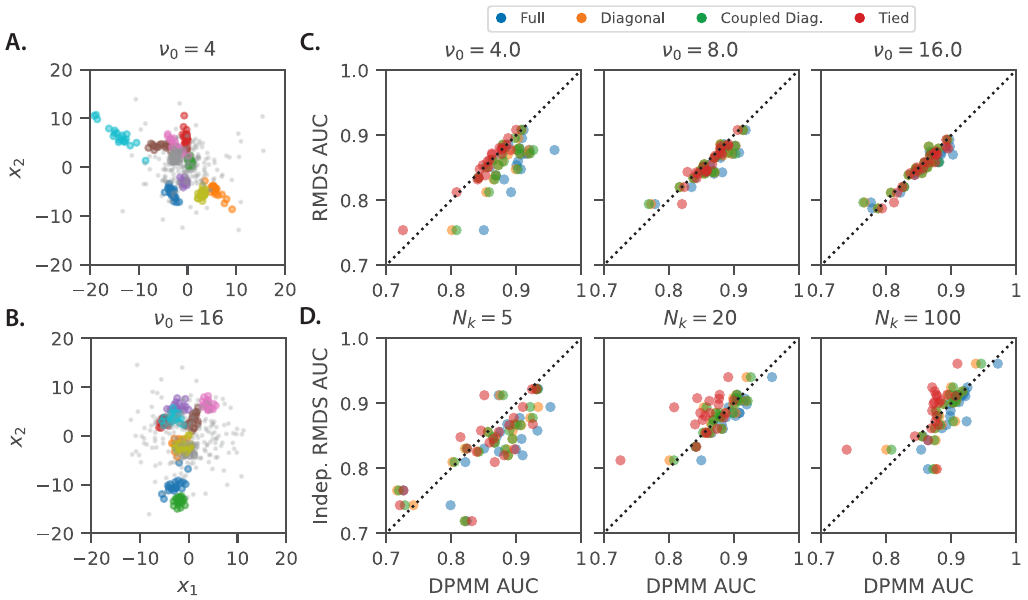}
    \end{center}

    \caption{Synthetic experiments panel. Example sampled 2D dataset from DPMM with params $\nu_0=4$ (\textbf{A}) and $16$ (\textbf{B}). Each data set has $K=10$ clusters with $N_k=20$ training data points each (colored dots). We evaluate performance on classifying outliers (gray dots) drawn from the prior predictive distribution. \textbf{C:} Performance of DPMM models vs. RMDS when sweeping over $\nu_0$ with $N_k=20$ shows that DPMMs outperform when $\nu_0$ is small and there is greater variation in the $\Sigma_k$'s. \textbf{D:} Independent RMDS performance vs. DPMMs as a function of $N_k$ with $\nu_0=4$. Independent RMDS only performs well when there are adequate numbers of samples per class.}
    \label{fig:synthpanel}
\end{figure*}

Our procedure for hyperparameter estimation and computing DPMM scores is very similar to those described above.
The only complication is that with the $\gamma_k$, the posterior distribution no longer has a simple closed form.
However, for any fixed value of $\gamma_k$, the coupled model is a straightforward generalization of the diagonal model above.
Since $\gamma_k$ is a one-dimensional variable, we can use numerical quadrature to integrate over its possible values.
Likewise, we can estimate the hyperparameter $\alpha_0$ using a generalized Newton method, just like for the concentration parameter $\nu_0$.
See \Cref{app:em-coupled} for complete details of this model.

\section{Experiments}
\label{sec:experiments}

We experimentally tested these hierarchical Gaussian DPMMs on real and synthetic datasets.
First we used simulated datasets to build intuition for where hierarchical models improve performance.
Then we compared hierarchical Gaussian DPMMs to other widely-used OOD metrics on the OpenOOD Benchmark,
and we studied performance versus dimensionality of the embeddings.

 \begin{table*}[!ht]
\caption{Performance of Hierarchical Gaussian DPMM and baseline methods on the OpenOOD benchmark datasets~\cite{yang2022openood,zhang23openood15}, including 3 ID datasets (Imagenet-1K~\citep{RussakovskyO15} and CIFAR-10/100~\citep{krizhevsky2009learning}), and Near and Far OOD datasets. Accuracy of the classifiers on predicting the label $y \in [K]$ for in-distribution test data is reported for each benchmark. Other columns report AUROC scores for OOD detection on OpenOOD benchmark datasets. $^*$We found the tied DPMM performance on Imagenet-1K improved when the prior parameter $\Sigma_0$ is set to the covariance of the data. For the CIFAR experiments we set $\Sigma_0$ to the covariance of the cluster means.
}
\label{tab:openood-baselines}
\begin{small}
\begin{center}
\resizebox{\linewidth}{!}{
\begin{tabular}{lccccccccc}
\toprule
 & \multicolumn{3}{c}{CIFAR-10} & \multicolumn{3}{c}{CIFAR-100} & \multicolumn{3}{c}{Imagenet-1K} \\
\cmidrule(lr){2-4}\cmidrule(lr){5-7}\cmidrule(lr){8-10}
Method & Accuracy & Near & Far & Accuracy & Near & Far & Accuracy & Near & Far\\
\midrule
MSP
& 94.93 & 88.36 & 91.80
& 76.19 & 80.33 & 78.88
& \textbf{80.90} & 75.80 & 86.30 \\
Temp. Scale
& 94.93 & 88.43 & 91.84
& 76.19 & \textbf{80.57} & 79.25
& \textbf{80.90} & 77.29 & 88.62 \\
MDS
& \textbf{95.04} & 85.41 & 90.15
& 76.10 & 58.86 & 69.29
& 80.41 & 78.97 & 92.57 \\
RMDS
& \textbf{95.04} & 89.83 & 92.42
& 76.10 & 80.17 & 82.97
& 80.41 & 80.03 & 92.59 \\
\midrule
\multicolumn{10}{c}{Hierarchical Gaussian DPMMs}\\
\midrule
Tied
& \textbf{95.04} & 89.83 & 92.42
& 76.10 & 80.17 & \textbf{82.98}
& 80.41$^*$ & 79.28$^*$ & \textbf{92.70}$^*$ \\
Full
& 94.95 & \textbf{90.63} & \textbf{93.50}
& \textbf{76.64} & 79.22 & 81.20
& 76.78 & 70.66 & 86.31 \\
Diagonal
& 94.76 & 89.14 & 90.86
& 76.07 & 79.22 & 82.88
& 76.54 & 80.60 & 90.85 \\
Coupled Diag.
& 94.76 & 88.30 & 90.70
& 76.04 & 78.05 & 79.29
& 76.52 & \textbf{80.98} & 90.72 \\
\bottomrule
\end{tabular}
}
\end{center}
\end{small}
\vspace{-1em}
\end{table*}

\subsection{Synthetically generated dataset experiments}
To understand the regimes in which hierarchical models yield benefits, we simulated $D=2$ dimensional data from full covariance models with varying $\nu_0$ and $N_k$.
When $\nu_0$ is small, covariances differ considerably across clusters, and the assumptions of the tied model (and of RMDS) are not well met.
Conversely, as $\nu_0 \to \infty$, the prior concentrates on $\Sigma_0$, and the covariances are effectively tied.
\Cref{fig:synthpanel}A and \ref{fig:synthpanel}B show simulated datasets from these two regimes.
As expected, \Cref{fig:synthpanel}C shows that hierarchical Gaussian DPMMs yielded considerable improvements when $\nu_0$ was small, and the largest improvements came from the full covariance model, which matched the data generating process.
Notably, the tied model matched the RMDS performance, as predicted in \Cref{sec:theory}.

We then asked if the hierarchical model was strictly necessary or whether a simpler model would suffice.
For example, we considered an ``Independent RMDS'' based on Mahalanobis distances to the per-class covariance estimates, $\hat{\Sigma}_k$, instead of the average covariance $\hat{\Sigma}$.
Intuitively, we expected the hierarchical models to perform best when there were few data points per cluster relative to the dimensionality of the embeddings; i.e., when $N_k/D$ is small.
Indeed, \Cref{fig:synthpanel}D shows that DPMMs offered substantial improvements in this regime, with diminishing gains as $N_k$ increased.

These analyses suggest that hierarchical Gaussian DPMMs should yield benefits in regimes where covariances differ across classes and the number of data points per class is relatively small.

\subsection{OpenOOD Benchmark}

Next, we compared hierarchical Gaussian DPMMs to other widely-used OOD detection methods on  the OpenOOD benchmark datasets~\cite{yang2022openood,zhang23openood15}.
The benchmark consists of 3 in-distribution (ID) datasets, CIFAR-10, CIFAR-100, and Imagenet-1K. For each ID dataset, several OOD datasets are grouped into \textit{Near} and \textit{Far}, where the Near OOD datasets are more similar to ID.
For the Imagenet-1K experiment, we used $D=768$ dimensional embeddings from the ViT-B-16 model trained according to the DeIT method~\citep{touvron2021training}, which are available in the Pytorch \texttt{torchvision}~\citep{torchvision2016} package.
The CIFAR experiments use OpenOOD's pretrained ResNet18~\citep{HeK16} features, $D=512$.
We preprocessed the embeddings as described in~\Cref{app:preprocessing}.
As baselines, we considered both MDS~\citep{lee18mds} and RMDS~\citep{ren21rmds}.
We also compare to maximum softmax probability (MSP)~\citep{HendrycksD17} and temperature scaled MSP with $T=1000$ (Temp. Scale)~\citep{guo17tempscale}
.
We trained a single linear layer with gradient descent and supervised cross-entropy loss for the MSP
methods.
For all models, we measured the accuracy of classifying which class an in-distribution test image came from, as well as the AUROC score for outlier detection across the OpenOOD datasets.
For DPMMs, we computed AUROC scores using~$\widetilde{C}(x)$.

Table~\ref{tab:openood-baselines} shows that hierarchical models do offer improved performance, but the results are nuanced.
We found the complexity and scale of the ID dataset determines which modeling assumptions are most appropriate.
The full covariance model yields the highest performance on Near and Far OOD on the small scale CIFAR-10 task.
Whereas the coupled diagonal model is the best performing model in Near OOD settings for the large-scale Imagenet-1K task.
The full covariance model performed surprisingly poorly on the Imagenet-1K task, and we suspected it was due to the high-dimensional embeddings.
We provide expanded results tables for each ID dataset that show performance on each OOD dataset in \cref{app:ablation,app:cifar10-ablation,app:cifar100-ablation}.

\subsection{Performance vs. Dimensionality}

\begin{figure*}[t]
    \centering
    \includegraphics[width=6.5in]{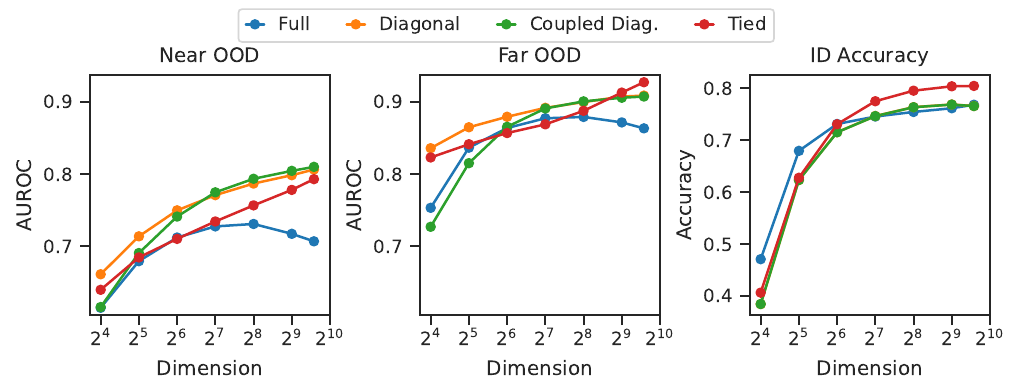}
    \caption{Performance on ``near OOD'', ``far OOD'', and in-distribution classification as a function of the feature dimension on the Imagenet-1K task. We projected the 768-dimensional ViT-B-16 features into lower dimensions using PCA, then projected into the eigenspace of the average within-class covariance. We compared the tied model (with full covariance) to the hierarchical model with full, diagonal, and coupled diagonal covariance and measured performance by area under the receiver operator curve (AUROC).}
    \label{fig:autowhiten-study}
\end{figure*}

Finally, we asked how these methods compare as we vary the dimensionality of the embeddings.
We suspected that the full covariance model would perform better in lower dimensions for two reasons.
First, it has $\cO(KD^2)$ parameters compared to only $\cO(KD)$ in the diagonal models and $\cO(D^2)$ in the tied model, so even with a hierarchical prior, the full model could still overfit.
This problem is exacerbated for classes that have fewer data points than the feature dimension, in which case the prior has a strong effect on the conditional distribution of the per-class covariance matrix and the posterior predictive distributions.
Second, we suspected that the inverse Wishart prior distribution, which has only a scalar concentration~$\nu_0$, may be a poor prior for high-dimensional covariance matrices.

To test this hypothesis, we swept the number of principal components retained in preprocessing (see~\cref{app:preprocessing}).
We found that out-of-distribution detection of the full-covariance hierarchical model plateaued for $D \geq 128$ dimensional embeddings (\cref{fig:autowhiten-study}).
By contrast, the diagonal and coupled diagonal models performed considerably better, especially on Near OOD benchmarks.
The diagonal covariance model outperforms the tied model across all dimensions on Near OOD detection, as well as in lower dimensions for Far OOD.
However, in-distribution classification accuracy plateaus around 256 dimensions.

Altogether, these analyses of synthetic and real datasets show that hierarchical Gaussian DPMMs are advantageous for OOD detection, especially in regimes where: \textit{i)}~covariance matrices differ across clusters; \textit{ii)}~the number of data points per cluster is small compared to the dimension; and \textit{iii)}~detection relies on fine-grained distinctions between training data and Near OOD test points.

\section{Discussion}
\label{sec:discussion}
We developed a theoretical connection between the relative Mahalanobis distance score for outlier detection and the outlier probability under a Gaussian DPMM with tied covariance.
This Bayesian nonparametric perpsective led us to propose hierarchical Gaussian DPMMs that allow each cluster a different covariance matrix, while still sharing statistical strength across classes via the prior.
We developed efficient EM algorithms to estimate the hyperparameters of the hierarchical models, and we studied their performance on synthetic data as well as the OpenOOD benchmarks.
We found that these models --- especially the coupled diagonal covariance model --- yielded improved performance on some benchmarks, especially the Near OOD benchmarks.

\paragraph{Limitations and Future Work}
Despite the hierarchical prior, we found that the full covariance Gaussian DPMMs were prone to overfitting.
Further work could explore low-rank plus diagonal covariance matrices, which would interpolate between the diagonal and full covariance models.
More generally, like RMDS and MDS, we assume that features are Gaussian distributed within each class.
The competitive performance on the OpenOOD Imagenet benchmark using ViT features suggests that this assumption is reasonable~\citep{yang2022openood,zhang23openood15}, but there is no guarantee.
Future work could consider fine-tuning the features or learning a nonlinear transformation to address this potential source of model misspecification, as in prototype networks~\citep{snell2017prototypical}.

Bayesian nonparametric approaches naturally extend to other closely related problems, like generalized category discovery~\cite{vaze22} and continual learning~\cite{van2022three}. For example,
given a collection of data points, a DPMM may have sufficient evidence to allocate new classes for the out-of-distribution data.
More generally, casting OOD as inference in a generative model brings modeling choices to the fore.
Here, we focused on the challenge of modeling high-dimensional covariance matrices that may vary across classes, but there are several other ways in which the simple Gaussian DPMM could be improved.
For example, we could attempt to capture the nonstationarity inherent in the OOD setting by allowing the prior predictive distribution to drift from what was inferred based on the training data.
Such a model could afford greater robustness on OOD detection tasks.

\paragraph{Conclusion}
In summary, we find that Bayesian nonparametric methods with hierarchical priors are a promising approach for OOD detection.
If the features extracted from foundation models are reasonably well approximated as realizations of Gaussian DPMMs, the posterior inferences under such models can provide accurate estimates of outlier probability.
This probabilistic perspective not only casts widely used methods in a new light, it also leads to practical model improvements and enables several lines of future inquiry.

\subsubsection*{Acknowledgments}
We thank Jingyang Zhang for his feedback and detailed knowledge of the OpenOOD codebase.
RWL was supported in part by the National Science Foundation (NSF) under Grant No.
2112562 to YC, the U.S. Army Research Laboratory and the U.S. Army Research Office (ARL/ARO)
under grant number ARO-W911NF-23-2-0224 to YC, and the Department of Defense (DoD) through the National Defense Science \& Engineering Graduate (NDSEG) Fellowship Program.
SWL was supported in part by fellowships from the Simons Collaboration on the Global Brain, the Alfred P. Sloan Foundation, and the McKnight Foundation.

Any opinions, findings, and conclusions or recommendations expressed in this
material are those of the authors and do not necessarily reflect the views of
the NSF, the ARL/ARO, or the DoD.

\bibliography{main}
\bibliographystyle{tmlr}

\appendix
\newpage
\section{Preprocessing}
\label{app:preprocessing}

\begin{figure}[h]
    \centering
    \tikz{
        \node (image) [draw, rounded rectangle, minimum height=1cm, minimum width=2cm] at (0cm, 0cm) {Image};
        \node (fe) [draw, align=center, rounded rectangle, minimum height=3cm, minimum width=3cm] at (3cm, 0cm) {Deep Network\\$f(\cdot\,;\phi)$};
        \node (pre) [draw, align=center, rounded rectangle, minimum height=3cm, minimum width=6cm] at (8cm, 0cm) {Preprocessor\\ \vspace{2cm}};
        \node (whit) [draw, align=center, rounded rectangle, minimum height=1cm, minimum width=2cm] at (6.25cm, 0cm) {Whiten};
        \node (rot) [draw, align=center, rounded rectangle, minimum height=1cm, minimum width=2cm] at (9.25cm, 0cm) {Rotatation into\\$\hat{\Sigma}$'s eigenbasis};
        \node (x) [draw, align=center, rounded rectangle, minimum height=1cm, minimum width=2cm] at (12.4cm, 0cm) {Feature\\$x$};
        \path[->,draw]
        (image) edge (fe)
        (fe) edge (pre)
        (whit) edge (rot)
        (pre) edge (x);
    }
    \caption{W\&R preprocessing flowchart.}
    \label{fig:preprocessing-flowchart}
\end{figure}

Before computing OOD scores, we first preprocessed the embeddings using PCA to whiten and sort the dimensions in order of decreasing variance.
We discarded dimensions with near zero variance to ensure the empirical covariance matrices were full rank.
We scaled each dimension by the inverse square root of the eigenvalues so that the transformed embeddings had identity covariance.
Finally, we rotated the embeddings using the eigenvectors of the average covariance matrix, so that the average within-class covariance matrix is diagonal.

More precisely, the preprocessing steps are as follows:
\begin{enumerate}
    \item Let $\{x_i\}_{i=1}^N$ denote the mean-centered embeddings.

    \item Let $\hat{\Sigma}_0 = U \Lambda U^\top$ denote the covariance of the centered embeddings and its eigendecomposition. Discard any dimensions with eigenvalues less than a threshold of approximately~$10^{-7}$, and then we project and scale the embeddings by,
    \begin{align}
        x_i' &\leftarrow \Lambda^{-\frac{1}{2}} U^T x_i,
    \end{align}
    so that the empirical covariance of $\{x_i'\}_{i=1}^n$ is the identity matrix.

    \item Compute the average within-class covariance $\hat{\Sigma} = \frac{1}{N} \sum_{i=1}^M (x_i' - \hat{\mu}_{y_i})(x_i' - \hat{\mu}_{y_i})^\top$, where $\hat{\mu}_k = \frac{1}{N_k} \sum_{i:y_i=k} x_i'$.

    \item Compute the eigendecomposition $\hat{\Sigma} = V S V^\top$, with eigenvalues $S = \mathrm{diag}(\sigma_1^2, \ldots, \sigma_D^2)$ sorted in increasing order of magnitude so that the first dimension has the smallest within-class covariance. The embeddings~$x_i'$ have unit variance in all dimensions, but along the dimension of the first eigenvector in~$V$, the average within-class covariance is smallest.

    \item Project the embeddings into this eigenbasis,
    \begin{align}
        z_i &\leftarrow V^\top x_i'.
    \end{align}
\end{enumerate}
After these preprocessing steps, the resulting embeddings $\{z_i\}_{i=1}^N$ are zero mean ($\hat{\mu}_0=0$), their empirical covariance is the identity ($\hat{\Sigma}_0=I$), and the average within-class covariance is diagonal ($\hat{\Sigma}=\mathrm{diag}(\sigma_1^2, \ldots, \sigma_D^2)$). The empirical within-class covariance matrix $\hat{\Sigma}_k$ for class~$k$ will \emph{not} generally be diagonal, but this sequence of preprocessing steps is intended to make them closer to diagonal on average.

Note that the relative Mahalanobis distance score is invariant to these linear transformations. They simply render the embeddings more amenable to our hierarchical models with diagonal covariance. We further test the effect of these preprocessing steps via the ablation experiment described in \Cref{app:ablation}.

\section{Further Details of Exploratory Analyses}
\label{app:exploratory_details}

First, we investigated the degree to which the sample covariance matrices differ between the 1000 classes in the Imagenet-1K dataset.
In accordance with the OpenOOD benchmark, we used embeddings from the ViT-B-16 model trained according to the DeIT method~\citep{touvron2021training}, which are~$D=768$ dimensional.
The embeddings are available in the Pytorch \texttt{torchvision}~\citep{torchvision2016} package.
We preprocessed the embeddings as described in~\Cref{app:preprocessing}.
For this analysis, we only kept the top 128 PCs to speed computation.

To measure the distance between covariance matrices for two clusters, we used the F\"orstner-Moonen (FM) metric~\citep{forstner_metric_2003},
\begin{align}
d(\mbSigma_1, \mbSigma_2) = \bigg[\sum_{i=1}^n (\log  \lambda_i (\mbSigma_1^{-1} \mbSigma_2))^{2}\bigg]^{\frac{1}{2}},
\end{align}
where $\lambda_i(\mbSigma_1^{-1} \mbSigma_2)$ is the $i$-th eigenvalue of $\mbSigma_1^{-1} \mbSigma_2$.
We computed the FM metric for all pair of empirical covariance matrices $(\hat{\Sigma}_k, \hat{\Sigma}_{k'})$.
We compared the distribution of FM distances under the real data to distances between covariance matrices sampled from the null model, in which~$\Sigma_k$ truly equals~$\hat{\Sigma}$ for all~$k$, and the differences in the estimates~$\hat{\Sigma}_k$ arise solely from sampling error.
The corresponding null distribution is a Wishart distribution, $\hat{\Sigma}_k \sim \mathrm{W}(\overline{N}, \hat{\Sigma}/\overline{N})$, where $\overline{N}$ is the average number of data points per class.

\section{Compute Resources}
\label{app:compute}
The OOD experiments were performed on a cluster consisting of compute nodes with 8 NVIDIA RTX A5000 GPUs. The OpenOOD~\citep{yang2022openood,zhang23openood15} experiments on the Imagenet-1K dataset~\citep{RussakovskyO15} utilized weights available from Pytorch's~\citep{PYTORCH} torchvision~\citep{torchvision2016} models. The compute across experiments was reduced by storing summary statistics of the embeddings for the Gaussian models. The DPMM fitting and prediction was performed on the CPU except for calculating the posterior and prior predictive distributions for each sample.

\section{Computational Complexity}
\label{app:compute-complexity}
The complexity of computing outlier scores is not that much greater than computing the relative Mahalanobis distance score (RMDS). Our preprocessing procedure (\Cref{app:preprocessing}) involves whitening and rotating the embeddings, which requires $\cO(D^2)$ memory and  $\cO(D^3)$ time, just like RMDS.

In order to account for different covariances across classes, we incur additional costs. The first costs are those of estimating the hyperparameters via expectation-maximization (EM). In the EM algorithm, we need to store expected covariance matrices (or rather, their inverses) for each class. This costs $O(KD^2)$ space for the full covariance model and $\cO(KD)$ space for the (coupled) diagonal model. Each M-step of the EM algorithm also involves inverting covariance matrices, so for the full covariance model the time complexity is $\cO(TKD^3)$, and for the diagonal and coupled-diagonal covariance model it is only $\cO(TKD)$, where $T$ is the number of EM iterations. Typically, we find that $T \approx 10$ iterations is enough to converge.

Finally, there is the cost of computing the final outlier scores, which involves computing Mahalanobis distances for RMDS and a marginal likelihood ratio the DPMM. For both RMDS and full covariance DPMM scores, these can be computed in $\cO(ND^2)$ time using the Cholesky factors of the covariance matrices from the preprocessing or EM algorithm; for the (coupled) diagonal covariance model, this cost is only $\cO(ND)$.

In sum, the additional expressivity of the Bayesian nonparametric mixture models incurs some additional cost. For the full covariance model, these costs can become burdensome (especially in memory), but for the diagonal and coupled diagonal model they are quite reasonable and, as we show in our experiments, they can lend a significant boost.

\section{Proof of Proposition \ref{prop:rmds_tied_dpmm}}
\label{app:prop_2_proof}
Using the definitions, we can write $\lambda_k$ exactly as a difference of log Gaussian densities:

$$\lambda_k \;=\; \log \frac{\mathcal{N}(x \mid \mu_k',\,\Sigma_k'+\hat\Sigma)}{\mathcal{N}(x \mid \hat\mu_0,\,\hat\Sigma_0+\hat\Sigma)} \;=\; \Big[\log \mathcal{N}(x \mid \mu_k',\,\Sigma_k'+\hat\Sigma)\;-\;\log \mathcal{N}(x \mid \hat\mu_0,\,\hat\Sigma_0+\hat\Sigma)\Big]\,.$$

Similarly, the RMDS for cluster $k$ can be written as a log-density ratio between a cluster Gaussian and the “background” Gaussian model:

$$\mathrm{RMD}_k(x) \;=\; 2\log\frac{\mathcal{N}(x \mid \hat{\mu}_k,\,\hat{\Sigma})}{\mathcal{N}(x \mid \hat{\mu}_0,\,\hat{\Sigma}_0)} + d,$$

where $d=\log|\hat\Sigma|-\log|\hat\Sigma_0|$.  For clarity, define the ideal log-ratio using true cluster parameters as

$$\rho_k(x) \;:=\; \log\frac{\mathcal{N}(x \mid \mu_k,\,\Sigma)}{\mathcal{N}(x \mid \mu_0,\,\Sigma_0)} \,.$$

Notice that $\frac{1}{2}[\mathrm{RMD}_k(x)-d] = \log\frac{\mathcal{N}(x \mid \hat{\mu}_k,\,\hat{\Sigma})}{\mathcal{N}(x \mid \hat{\mu}_0,\,\hat{\Sigma}_0)} =: \hat{\rho}_k(x)$ is the same ratio but with empirical estimates $(\hat{\mu}_k,\hat{\Sigma})$ in place of $(\mu_k,\Sigma)$, and $\rho_k(x)$ uses the true cluster mean and covariance. We wish to bound $2|\lambda_k - \hat\rho_k|$.\\
We now can use the triangle inequality and decompose this into four separate terms:
\begin{align*}
2|\lambda_k - \hat\rho_k| &\leq \left|\log|\hat\Sigma_0 + \hat\Sigma| - \log{|\hat\Sigma_0|} \right|\bigg\} & & & (\mathsf{A}) \\
&\quad \quad + \left|(x - \hat\mu_0)^T\left[(\hat\Sigma_0 + \hat\Sigma)^{-1} - \hat\Sigma_0^{-1}\right](x - \hat\mu_0)\right|\bigg\} & & & (\mathsf B) \\
& \quad \quad + \left|\log{|\Sigma_k' + \hat\Sigma|} - \log{|\hat\Sigma|}\right|\bigg\}& & &  (\mathsf C) \\
&\quad \quad +\left|(x - \hat\mu_k)^T\hat\Sigma^{-1}(x - \hat\mu_k)- (x - \mu_k')^T(\Sigma_k + \hat\Sigma)^{-1}(x - \mu_k')\right|\bigg\}& & &
 (\mathsf D)
\end{align*}
To bound $\mathsf{A}$, we first note that \[\log|\hat\Sigma_0 + \hat\Sigma| - \log{|\hat\Sigma_0|} = \log|I + \hat\Sigma_0^{-1}\hat\Sigma| = \log |I + \hat\Sigma_0^{-1/2}\hat\Sigma\hat\Sigma_0^{-1/2}| = \log |I + R|\] using the fact that the determinant behaves multiplicatively with respect to the matrix product. Then given that the determinant is the product of the eigenvalues we finish our bound on $\mathsf{A}$
\[
\mathsf{A} \leq \sum_{i=1}^D\log(1 + \lambda_i^{(R)}) \leq D\log(1 + \kappa) \leq D\kappa
\]
thus this term is less than $\delta_1$ when $\kappa < \kappa_1 \leq \frac{\delta_1}{D}$.
For term $\mathsf{B}$, recall that \[
(A + B)^{-1} = A^{-1} - A^{-1}B(A + B)^{-1}
\]
(which can be verified by right multiplying by $A + B$) combined with the fact that
\begin{align*}
\hat\Sigma_0^{1/2}\hat\Sigma_0^{-1}\hat\Sigma(\hat\Sigma_0 + \hat\Sigma)^{-1}\hat\Sigma_0^{1/2} &= \hat\Sigma_0^{-1/2}\hat\Sigma(\hat\Sigma_0 + \hat\Sigma)^{-1}\hat\Sigma_0^{1/2} \\
&= \hat\Sigma_0^{-1/2}\hat\Sigma(\hat\Sigma_0(I + \hat\Sigma_0^{-1}\hat\Sigma))^{-1}\hat\Sigma_0^{1/2} \\
&= \hat\Sigma_0^{-1/2}\hat\Sigma(I + \hat\Sigma_0^{-1}\hat\Sigma)^{-1}\hat\Sigma_0^{-1}\hat\Sigma_0^{1/2} \\
&= \hat\Sigma_0^{-1/2}\hat\Sigma(I + \hat\Sigma_0^{-1}\hat\Sigma)^{-1}\hat\Sigma_0^{-1/2} \\
&= \hat\Sigma_0^{-1/2}\hat\Sigma[\hat\Sigma_0^{-1/2}(I + R)\hat\Sigma_0^{1/2}]^{-1}\hat\Sigma_0^{-1/2}\\
&= R[1 + R]^{-1}
\end{align*}
Means that we can say
\begin{align*}
\hat\Sigma_0^{-1}\hat\Sigma(\hat\Sigma_0 + \hat\Sigma)^{-1} &\preceq \frac{\kappa}{1 + \kappa}\hat\Sigma_0^{-1}
\end{align*}
By the standard fact that for $B$ positive definite, $A^{-1/2}BA^{-1/2} \preceq \alpha I$ if and only if $B \preceq \alpha A$.
So combining all of these identities we have
\[
\mathsf{B} \leq \frac{\kappa}{1 + \kappa}(x - \hat\mu_0)^\top \hat\Sigma_0^{-1}(x - \hat\mu_0)
\]
Which we then bound with the standard Hanson Wright inequality \cite{vershynin2018high} to get that with probability $1 - \epsilon_1$
\begin{align*}
    \mathsf{B} \leq \kappa \sigma^2 (D + \sqrt{D\log\frac{1}{\epsilon_1}} + 2\log \frac{1}{\epsilon_1})
\end{align*}
Which is less than $\delta_2$ when $\kappa < \kappa_2 = \left[\sigma^2 (D + \sqrt{D\log\frac{1}{\epsilon_1}} + 2\log \frac{1}{\epsilon_1})\right]^{-1}\delta_2$

Now to deal with term $\mathsf{C}$, we again note the multiplicativity of the determinant with respect to matrix products and see that
\[
\log |\Sigma_k' + \hat\Sigma| - \log |\hat\Sigma| = \log | I + \hat\Sigma^{-1}\Sigma_k'|
\]

But recall that $\Sigma_k' = (\hat\Sigma_0^{-1} + N\hat\Sigma^{-1})^{-1}$, which immediately gives a useful result when combined with the fact that $\hat\Sigma_0^{-1}\hat\Sigma = \hat\Sigma_0^{-1/2}R\hat\Sigma_0^{1/2}$
\begin{align*}
    \log | I + \hat\Sigma^{-1}\Sigma_k'| &= \log | I + \hat\Sigma^{-1}(\hat\Sigma_0^{-1} + N\hat\Sigma^{-1})^{-1}| \\
    &= \log |I + (NI + \hat\Sigma_0^{-1}\hat\Sigma)^{-1}| \\
    &= \log |I + (NI + \hat\Sigma_0^{-1/2}R\hat\Sigma_0^{1/2})^{-1}| \\
    &= \log |\hat\Sigma_0^{1/2}\hat\Sigma_0^{-1/2} + (N\hat\Sigma_0^{-1/2}\hat\Sigma_0^{1/2} + \hat\Sigma_0^{-1/2}R\hat\Sigma_0^{1/2})^{-1}| \\
    &= \log |\hat\Sigma_0^{1/2}\hat\Sigma_0^{-1/2} + \hat\Sigma_0^{1/2}(NI + R)^{-1}\hat\Sigma_0^{-1/2}| \\
    &= \log |\hat\Sigma_0^{1/2}||I + (NI + R)^{-1}||\hat\Sigma_0^{-1/2}| \\
    &= \log |I + (NI + R)^{-1}|
\end{align*}
Now we are functionally done, since we can bound this easily using known bounds of the logarithm (easy to check by concavity) to say
\begin{align*}
    \log |I + (NI + R)^{-1}| &= \sum_{i=1}^D\log \left(1 + \frac{1}{N + \lambda_i}\right)
\end{align*}
\[
\mathsf{C} \leq D \log\left(1 + \frac{1}{N}\right) \leq \frac{D}{N}
\]
So when $N \geq N_1 > \frac{D}{\delta_3}$ then $\mathsf{C} \leq \delta_3$.

Finally, we turn to $\mathsf{D}$. We note that we can re-write it as
\[
(x - \mu_k' + \mu_k' - \hat\mu_k)^T\hat\Sigma^{-1}(x - \mu_k' + \mu_k' - \hat\mu_k) - (x - \mu_k')^T(\Sigma_k + \hat\Sigma)^{-1}(x - \mu_k')
\]
which decomposes into three parts. Trying to ease notation, let $v = x - \mu_k'$ and $u = \mu_k' - \hat\mu_k$. So then $\mathsf{D}$ is just
\[
\underbrace{v^\top[\hat\Sigma^{-1} - (\Sigma_k + \hat\Sigma)^{-1}]v}_{(i)} + \underbrace{u^\top\hat\Sigma^{-1}u}_{(ii)} + \underbrace{2 u^\top\hat\Sigma^{-1}v}_{(iii)}
\]
For the first piece, we again use the identity
\[
(A + B)^{-1} = A^{-1} - A^{-1}B(A + B)^{-1}
\]
to get that $(i)$ is equal to
\[
v^\top \hat\Sigma^{-1}\Sigma_k'(\hat\Sigma + \Sigma_k')^{-1}v
\]
And recall the Loewner-Heinz inequality \cite{Loewner34, Carlen10} tells us that when $A + B \succ A$, then $(A + B)^{-1} \prec A^{-1}$. Then note that we can show with not too much difficulty (using the fact that since $A$ and $B$ are positive definite then $B^{-1/2}AB^{-1/2} \prec I \iff A \prec B$) that this implies
\[
\hat\Sigma^{-1}\Sigma_k'(\hat\Sigma + \Sigma_k')^{-1} \preceq \hat\Sigma^{-1}\Sigma_k'\hat\Sigma^{-1}
\]
But since $\Sigma_k' \preceq \frac{1}{N}\hat\Sigma$ we get that $(i)$ is bounded by
\[
\frac{1}{N}v^\top \hat\Sigma^{-1}v
\]
now here we again use Hanson Wright to say with probability at least $1-\epsilon_2$, as long as $N > N_2 = \frac{1}{\delta_4}\sigma^2\left(D + \sqrt{D \log \frac{1}{\epsilon_2}} + 2\log \frac{1}{\epsilon_2}\right)$ it is bounded.
For term $(ii)$, we have rapid convergence of $\mu_k'$ to $\hat\mu_k$ and note that $\hat\Sigma^{-1}$ is bounded above by $\frac{1}{N}\|\hat\Sigma\|_{op}$. While the difference between the means is bounded by $\frac{1}{N}\left(2\|\hat\Sigma\|_{op}\lambda_{\min}(\hat\Sigma_0)\right)$ so if $N \geq N_3 = \frac{1}{\delta_5}\left(2\|\hat\Sigma\|_{op}\lambda_{\min}(\hat\Sigma_0)\right)$, then we have convergence.
Finally, we can just use Cauchy-Schwarz to argue that if $N$ is set as above, then the bounds for both of the above hold for the cross term.
Combining all terms together, we have that if $\kappa < \min\{\kappa_1, \kappa_2\}$ and $N > \max\{N_1, N_2, N_3\}$, then the difference between $\lambda_k$ and $\rho_k$ is less than $\delta = \delta_1 + \delta_2 + \delta_3 + \delta_4 + \delta_5$ with probability $1 - \epsilon$ as long as $\epsilon_1 + \epsilon_2 \leq \epsilon$. \\
Finally,
\[
\tilde{C}(x) =
\log \sum_{k} \exp\bigl(\lambda_k(x)\bigr).
\]
The map $(\lambda_1,\dots,\lambda_K)\mapsto \log\bigl(\sum_k e^{\lambda_k}\bigr)$ is 1-Lipschitz continuous in the $\ell_\infty$-norm in the sense that
if $|\lambda_k - \lambda_k'|\le \delta$ for all $k$,
then
\(\bigl|\mathrm{LogSumExp}\{\lambda_k\}
\;-\;
\mathrm{LogSumExp}\{\lambda_k'\}\bigr|\le \,\epsilon\).
Hence if $\lambda_k(x)$ is within $\delta$ of
$\tfrac{1}{2}\,[\mathrm{RMD}_k(x)-d]$
uniformly in $k$,
then $\tilde{C}(x)$ is within $\delta$ of
$\mathrm{LogSumExp}_k \{\tfrac12\,\mathrm{RMD}_k(x)-d\}$.
Noting that
\begin{align*}
|\mathrm{LogSumExp}\{\lambda_k\} - \max_k \{C_k(x)\}| &\leq |\mathrm{LogSumExp}\{\lambda_k\} - \mathrm{LogSumExp}\{C_k(x)\}| \\
&\quad\quad+ |\mathrm{LogSumExp}\{C_k(x)\} - \max_kC_k(x)| \\
&\leq \delta + \log K
\end{align*}
Thus we conclude
\[
\bigl|\,
\tilde{C}(x)
\;-\;
\bigl[\tfrac12\,C(x) - d\bigr]
\bigr|
\;\le\;
\, \delta + \log K,
\]
with probability at least $1-\epsilon$.
This completes the proof.

\clearpage

\section{EM Algorithm for the Full Covariance Model}
\label{app:em-hdpmm}
Here we describe an expectation-maximization (EM) algorithm for estimating the hyperparameters of the hierarchical covariance model. Recall that under this model,
\begin{align*}
    \Sigma_k &\sim \mathrm{IW}(\nu_0, (\nu_0 - D - 1) \Sigma_0) \\
    \mu_k &\sim \mathrm{N}(\mu_0, \kappa_0^{-1} \Sigma_k)
\end{align*}
so that the prior hyperparameter $\Sigma_0$ specifies the mean of the per-class covariances, $\E[\Sigma_k] = \Sigma_0$. This hyperparameter should not be confused with $\hat{\Sigma}_0$ defined in the main text, which denoted the empirical estimate of the marginal covariance. Also note that this prior formulation requires $\nu_0 > D+1$.

First, we set the hyperparameters $\mu_0$ and $\Sigma_0$ using empirical Bayes estimates,
\begin{align}
    \mu_0 &= \hat{\mu}_0 = \frac{1}{N} \sum_{n=1}^N x_n \\
    \Sigma_0 &= \hat{\Sigma} = \frac{1}{N} \sum_{n=1}^N (x_n - \hat{\mu}_{y_n})(x_n - \hat{\mu}_{y_n})^\top
\end{align}
where $\hat{\mu}_{y_n} = \frac{1}{N_k} \sum_{n: y_n=k} x_n$.

To set the remaining hyperparameters, $\kappa_0$ and $\nu_0$, we use EM. The expected log likelihood is separable over these two hyperparameters,
\begin{align*}
    \cL(\nu_0, \kappa_0) &= \cL(\nu_0) + \cL(\kappa_0) \\
    \cL(\nu_0)
    &= \E\left[\sum_{k=1}^K \log \mathrm{IW}(\Sigma_k \mid \nu_0, (\nu_0 - D - 1)\Sigma_0) \right] \\
    \cL(\kappa_0)
    &= \E \left[ \sum_{k=1}^K \log \mathrm{N}(\mu_k \mid \mu_0, \kappa_0^{-1} \Sigma_k) \right]
\end{align*}
where the expectations are taken with respect to the posterior distribution over $\{\mu_k, \Sigma_k\}_{k=1}^K$.

\subsection{M-step for \texorpdfstring{$\nu_0$}{ν₀}}
Expanding the first objective yields,
\begin{align*}
    \cL(\nu_0)
    &= \sum_{k=1}^K \E \left[ \log \left\{ \frac{|(\nu_0 - D - 1)\Sigma_0|^{\frac{\nu_0}{2}}}{2^{\frac{\nu_0 D}{2}} \Gamma_D(\frac{\nu_0}{2}) } |\Sigma_k|^{-\frac{\nu_0 + D + 1}{2}} e^{-\mathrm{Tr}(\tfrac{\nu_0 - D - 1}{2} \Sigma_0 \Sigma_k^{-1})} \right\}  \right] \\
    &= \sum_{k=1}^K  \tfrac{\nu_0 D}{2} \log \left(\tfrac{\nu_0 - D - 1}{2}\right) + \tfrac{\nu_0}{2} \log |\Sigma_0| - \log \Gamma_D(\tfrac{\nu_0}{2}) -\tfrac{\nu_0 + D + 1}{2} \E[\log |\Sigma_k|] - \tfrac{\nu_0 - D - 1}{2} \mathrm{Tr}(\Sigma_0 \E[\Sigma_k^{-1}]) \\
    &= \sum_{k=1}^K  \tfrac{\nu_0 D}{2} \log \left(\tfrac{\nu_0 - D - 1}{2}\right) + \tfrac{\nu_0}{2} \left[\log |\Sigma_0| - \E[\log |\Sigma_k|] - \mathrm{Tr}(\Sigma_0 \E[\Sigma_k^{-1}]) \right] - \log \Gamma_D(\tfrac{\nu_0}{2}).
\end{align*}
We can maximize this objective using a generalized Newton's method~\citep{minka2000beyond}.
We need the first and second derivatives of the objective,
\begin{align*}
    \cL'(\nu_0)
    &= \sum_{k=1}^K  \tfrac{D}{2} \left[ \log \left(\tfrac{\nu_0 - D - 1}{2}\right) + \tfrac{\nu_0}{\nu_0 - D - 1} \right] + \tfrac{1}{2} \left[\log |\Sigma_0| - \E[\log |\Sigma_k|] - \mathrm{Tr}(\Sigma_0 \E[\Sigma_k^{-1}]) \right] - \tfrac{1}{2} \psi_D(\tfrac{\nu_0}{2}) \\
    \cL''(\nu_0)
    &= \sum_{k=1}^K  \tfrac{D}{2} \left[\tfrac{1}{\nu_0 - D - 1} -  \tfrac{D+1}{(\nu_0 - D - 1)^2}\right] - \tfrac{1}{4} \psi_D^{(2)}(\tfrac{\nu_0}{2}).
\end{align*}
The idea is to lower bound the objective with a concave function of the form,
\begin{align*}
    g(\nu_0) = k + a \log \nu_0 + b \nu_0
\end{align*}
which has derivatives $g'(\nu_0) = \tfrac{a}{\nu_0} + b$ and $g''(\nu_0) = -\tfrac{a}{\nu_0^2}$. Matching derivatives implies,
\begin{align*}
    a &= -\nu_0^2 \cL''(\nu_0) \\
    b &= \cL'(\nu_0) - \tfrac{a}{\nu_0} \\
    k &= \cL(\nu_0) - a\log \nu_0 -b \nu_0.
\end{align*}
For $a > 0$ and $b<0$, the maximizer of the lower bound is obtained at
\begin{align}
    \nu_0^\star
    &= -\frac{a}{b} = \frac{\nu_0^2 \cL''(\nu_0)}{\cL'(\nu_0) + \nu_0 \cL''(\nu_0)}
\end{align}

\subsection{M-step for \texorpdfstring{$\kappa_0$}{κ₀}}
Expanding the second objective,
\begin{align*}
    \cL(\kappa_0) &= \sum_{k=1}^K \tfrac{D}{2} \log \kappa_0 - \tfrac{\kappa_0}{2} \E \left[(\mu_k - \mu_0)^\top \Sigma_k^{-1} (\mu_k - \mu_0) \right] + c.
\end{align*}
The maximum is obtained at,
\begin{align*}
    \kappa_0^\star &= \left(\frac{1}{KD} \sum_{k=1}^K \E \left[(\mu_k - \mu_0)^\top \Sigma_k^{-1} (\mu_k - \mu_0) \right]\right)^{-1}.
\end{align*}

\subsection{Computing the posterior expectations}
Under the conjugate prior, those posteriors are normal inverse Wishart distributions,
\begin{align*}
    \mu_k, \Sigma_k \mid \{x_n: y_n = k\}
    &\sim \mathrm{NIW}(\nu_k', \Sigma_k', \kappa_k', \mu_k') \\
    \nu_k' &= \nu_0 + N_k \\
    \kappa_k' &= \kappa_0 + N_k \\
    \mu_k' &= \frac{1}{\kappa_k'} \left(\kappa_0 \mu_0 + \sum_{n: y_n=k} x_n \right) \\
    \Sigma_k' &= (\nu_0 - D - 1) \Sigma_0 + \kappa_0 \mu_0 \mu_0^\top + \sum_{n:y_n=k} x_n x_n^\top - \kappa_k' \mu_k' \mu_k'^\top
\end{align*}
To evaluate the objectives above, we need the following expected sufficient statistics of the normal inverse Wishart distribution,
\begin{align*}
    \E[\Sigma_k^{-1}] &= \nu_k' \Sigma_k'^{-1} \\
    \E[\log |\Sigma_k|] &= \log |\Sigma_k'| - \psi_D(\tfrac{\nu_k'}{2}) - D \log 2 \\
    \E[(\mu_k - \mu_0)^\top \Sigma_k^{-1} (\mu_k - \mu_0)] &= \frac{1}{\kappa_k'} + (\mu_k' - \mu_0)^\top \E[\Sigma_k^{-1}] (\mu_k' - \mu_0)
\end{align*}

Since the tied covariance approach in RMDS already works quite well, we recommend initializing the EM iterations by setting $\nu_0 \approx \bar{N}_k$ and $\kappa_0 \approx 0$. That way, the covariances are strongly coupled across clusters and the means have an uninformative prior.

\subsection{Marginal Likelihood}
This EM algorithm maximizes the marginal likelihood,
\begin{align*}
    \log p(\{x_n, y_n\}_{n=1}^N)
    &= \sum_{k=1}^K \log p(\{x_n: y_n =k\})  \\
    &= \sum_{k=1}^K \log \int p(\{x_n: y_n=k\} \mid \mu_k, \Sigma_k) \, p(\mu_k, \Sigma_k) \dif \mu_k \dif \Sigma_k \\
    &= \sum_{k=1}^K \log \int \left[\prod_{n:y_n=k} \mathrm{N}(x_n \mid \mu_k, \Sigma_k) \right] \mathrm{NIW}(\mu_k, \Sigma_k \mid \nu_0, \Sigma_0, \kappa_0, \mu_0) \dif \mu_k \dif \Sigma_k \\
    &= \sum_{k=1}^K \log Z(\nu_k', \Sigma_k', \kappa_k', \mu_k') - \log Z(\nu_0, (\nu_0 - D - 1) \Sigma_0, \kappa_0, \mu_0) + c
\end{align*}
where
\begin{align*}
    \log Z(\nu, \Sigma, \kappa, \mu) = -\tfrac{D}{2} \log \kappa + \log \Gamma_D(\tfrac{\nu}{2}) + \tfrac{\nu D}{2}\log 2  - \tfrac{\nu}{2} \log |\Sigma|
\end{align*}
is the log normalizer of the normal inverse Wishart distribution, and $c$ is constant with respect to the hyperparameters being optimized (but it is data dependent).

\section{EM Algorithm for the Diagonal Covariance Model}
\label{app:em-diag-hdpmm}
Here we describe an expectation-maximization (EM) algorithm for estimating the hyperparameters of the hierarchical diagonal covariance model. Recall that under this model,
\begin{align*}
    x_{n,d} \mid y_n=k &\sim \mathrm{N}(\mu_{k,d}, \sigma_{k,d}^2)
\end{align*}
where
\begin{align*}
    \sigma_{k,d}^2 &\sim \chi^{-2}(\nu_{0,d}, \sigma_{0,d}^2) \\
    \mu_{k,d} &\sim \mathrm{N}(\mu_{0,d}, \kappa_{0,d}^{-1} \sigma_{k,d}^2),
\end{align*}
for each dimension $d=1,\ldots,D$ independently.
Under the prior $\E[\sigma_{k,d}^2] = \frac{\nu_{0,d}}{\nu_{0,d} - 2} \sigma_{0,d}^2$, which is approximately $\sigma_{0,d}^2$ for large degrees of freedom $\nu_{0,d}$.

First, we set the hyperparameters $\mu_{0,d}$ and $\sigma_{0,d}^2$ using empirical Bayes estimates,
\begin{align}
    \mu_{0,d} &= \hat{\mu}_{0,d} = \frac{1}{N} \sum_{n=1}^N x_{n,d} \\
    \sigma_{0,d}^2 &= \hat{\sigma}_{d}^2 = \frac{1}{N} \sum_{n=1}^N (x_{n,d} - \hat{\mu}_{y_n,d})(x_{n,d} - \hat{\mu}_{y_n,d})^\top
\end{align}
where $\hat{\mu}_{y_n,d} = \frac{1}{N_k} \sum_{n: y_n=k} x_{n,d}$.

To set the remaining hyperparameters, $\kappa_{0,d}$ and $\nu_{0,d}$, we use EM. The expected log likelihood is separable over these two hyperparameters,
\begin{align*}
    \cL(\nu_{0,d}, \kappa_{0,d}) &= \cL(\nu_{0,d}) + \cL(\kappa_{0,d}) \\
    \cL(\nu_{0,d})
    &= \E\left[\sum_{k=1}^K \log \chi^{-2}(\sigma_{k,d}^2 \mid \nu_{0,d}, \sigma_{0,d}^2) \right] \\
    \cL(\kappa_{0,d})
    &= \E \left[ \sum_{k=1}^K \log \mathrm{N}(\mu_{k,d} \mid \mu_{0,d}, \kappa_{0,d}^{-1} \sigma_{k,d}^2) \right]
\end{align*}
where the expectations are taken with respect to the posterior distribution over $\{\mu_{k,d}, \sigma_{k,d}\}_{k=1}^K$.

\subsection{M-step for \texorpdfstring{$\nu_{0,d}$}{ν₀}}
Expanding the first objective yields,
\begin{align*}
    \cL(\nu_{0,d})
    &= \sum_{k=1}^K \E \left[ \log \left\{ \frac{(\tfrac{\nu_{0,d}}{2} \sigma_{0,d}^2)^{\tfrac{\nu_{0,d}}{2}}}{\Gamma(\frac{\nu_{0,d}}{2}) } (\sigma_{k,d}^2)^{-\frac{\nu_{0,d} + 2}{2}} e^{-\frac{\nu_{0,d} \sigma_{0,d}^2}{2 \sigma_{k,d}^2}} \right\}  \right] \\
    &= \sum_{k=1}^K  \tfrac{\nu_{0,d}}{2} \log \left(\tfrac{\nu_{0,d}}{2}\right) + \tfrac{\nu_{0,d}}{2} \log \sigma_{0,d}^2 - \log \Gamma(\tfrac{\nu_{0,d}}{2}) -\tfrac{\nu_{0,d} + 2}{2} \E[\log \sigma_{k,d}^2] - \tfrac{\nu_{0,d}}{2} \sigma_{0,d}^2 \E[\sigma_{k,d}^{-2}] \\
    &= \sum_{k=1}^K  \tfrac{\nu_{0,d}}{2} \log \left(\tfrac{\nu_{0,d}}{2}\right) + \tfrac{\nu_{0,d}}{2} \left[\log \sigma_{0,d}^2 - \E[\log \sigma_{k,d}^2] - \sigma_{0,d}^2 \E[\sigma_{k,d}^{-2}]) \right] - \log \Gamma(\tfrac{\nu_{0,d}}{2}) + c.
\end{align*}
We can maximize this objective using a generalized Newton's method~\citep{minka2000beyond}.
We need the first and second derivatives of the objective,
\begin{align*}
    \cL'(\nu_{0,d})
    &= \sum_{k=1}^K  \tfrac{1}{2} \left[ \log \left(\tfrac{\nu_{0,d}}{2}\right) + 1 \right] + \tfrac{1}{2} \left[\log \sigma_{0,d}^2 - \E[\log \sigma_{k,d}^2] - \sigma_{0,d}^2 \E[\sigma_{k,d}^{-2}]) \right] - \tfrac{1}{2} \psi(\tfrac{\nu_{0,d}}{2}) \\
    \cL''(\nu_{0,d})
    &= \sum_{k=1}^K  \tfrac{1}{2 \nu_{0,d}} - \tfrac{1}{4} \psi'(\tfrac{\nu_{0,d}}{2}).
\end{align*}
The idea is to lower bound the objective with a concave function of the form,
\begin{align*}
    g(\nu_{0,d}) = k + a \log \nu_{0,d} + b \nu_{0,d}
\end{align*}
which has derivatives $g'(\nu_{0,d}) = \tfrac{a}{\nu_{0,d}} + b$ and $g''(\nu_{0,d}) = -\tfrac{a}{\nu_{0,d}^2}$. Matching derivatives implies,
\begin{align*}
    a &= -\nu_{0,d}^2 \cL''(\nu_{0,d}) \\
    b &= \cL'(\nu_{0,d}) - \tfrac{a}{\nu_{0,d}} \\
    k &= \cL(\nu_{0,d}) - a\log \nu_{0,d} -b \nu_{0,d}.
\end{align*}
For $a > 0$ and $b<0$, the maximizer of the lower bound is obtained at
\begin{align}
    \nu_{0,d}^\star
    &= -\frac{a}{b} = \frac{\nu_{0,d}^2 \cL''(\nu_{0,d})}{\cL'(\nu_{0,d}) + \nu_{0,d} \cL''(\nu_{0,d})}
\end{align}

\subsection{M-step for \texorpdfstring{$\kappa_{0,d}$}{κ₀}}
Expanding the second objective,
\begin{align*}
    \cL(\kappa_{0,d}) &= \sum_{k=1}^K \tfrac{1}{2} \log \kappa_{0,d} - \tfrac{\kappa_{0,d}}{2} \E \left[\frac{(\mu_{k,d} - \mu_{0,d})^2}{\sigma_{k,d}^2}\right] + c.
\end{align*}
The maximum is obtained at,
\begin{align*}
    \kappa_{0,d}^\star &= \left(\frac{1}{K} \sum_{k=1}^K \E \left[\frac{(\mu_{k,d} - \mu_{0,d})^2}{\sigma_{k,d}^2}\right]\right)^{-1}.
\end{align*}

\subsection{Computing the posterior expectations}
Under the conjugate prior, those posteriors are normal inverse chi-squared distributions,
\begin{align*}
    \mu_{k,d}, \sigma_{k,d}^2 \mid \{x_n: y_n = k\}
    &\sim \mathrm{NIX}(\nu_{k,d}', \sigma_{k,d}'^2, \kappa_{k,d}', \mu_{k,d}') \\
    \nu_{k,d}' &= \nu_{0,d} + N_{k} \\
    \kappa_{k,d}' &= \kappa_{0,d} + N_{k} \\
    \mu_{k,d}' &= \frac{1}{\kappa_{k,d}'} \left(\kappa_0 \mu_{0,d} + \sum_{n: y_n=k} x_n \right) \\
    \sigma_{k,d}'^2 &= \frac{1}{\nu_{k,d}'} \left[\nu_{0,d} \sigma_{0,d}^2 + \kappa_{0,d} \mu_{0,d}^2 + \sum_{n:y_n=k} x_{n,d}^2 - \kappa_{k,d}' \mu_{k,d}'^2 \right]
\end{align*}
To evaluate the objectives above, we need the following expected sufficient statistics of the normal inverse chi-squared distribution,
\begin{align*}
    \E[\sigma_{k,d}^{-2}] &= \sigma_{k,d}'^{-2} \\
    \E[\log \sigma_{k,d}^2] &= \log \tfrac{\nu_{k,d}' \sigma_{k,d}'^2}{2} - \psi(\tfrac{\nu_{k,d}'}{2}) \\
    \E \left[\frac{(\mu_{k,d} - \mu_{0,d})^2}{\sigma_{k,d}^2}\right] &= \frac{1}{\kappa_{k,d}'} +  \frac{(\mu_{k,d}' - \mu_{0,d})^2}{\sigma_{k,d}'^2}
\end{align*}

Since the tied covariance approach in RMDS already works quite well, we recommend initializing the EM iterations by setting $\nu_{0,d} \approx \bar{N}_k$ and $\kappa_{0,d} \approx 0$. That way, the covariances are strongly coupled across clusters and the means have an uninformative prior.

\subsection{Marginal Likelihood}
This EM algorithm maximizes the marginal likelihood,
\begin{align*}
    \log p(\{x_n, y_n\}_{n=1}^N)
    &= \sum_{k=1}^K \log p(\{x_n: y_n =k\})  \\
    &= \sum_{d=1}^D \sum_{k=1}^K \log \int p(\{x_{n,d}: y_n=k\} \mid \mu_{k,d}, \sigma_{k,d}^2) \, p(\mu_{k,d} \sigma_{k,d}^2) \dif \mu_{k,d} \dif \sigma_{k,d}^2 \\
    &= \sum_{d=1}^D \sum_{k=1}^K \log \int \left[\prod_{n:y_n=k} \mathrm{N}(x_{n,d} \mid \mu_{k,d}, \sigma_{k,d}^2) \right] \mathrm{NIX}(\mu_{k,d}, \sigma_{k,d}^2 \mid \nu_{0,d}, \sigma_{0,d}^2, \kappa_{0,d}, \mu_{0,d}) \dif \mu_{k,d} \dif \sigma_{k,d}^2 \\
    &= \sum_{d=1}^D \left[ \sum_{k=1}^K \left(\log Z(\nu_{k,d}', \sigma_{k,d}'^2, \kappa_{k,d}', \mu_{k,d}') - \log Z(\nu_{0,d}, \sigma_{0,d}^2, \kappa_{0,d}, \mu_{0,d})\right) + c \right]
\end{align*}
where
\begin{align*}
    \log Z(\nu, \sigma^2, \kappa, \mu) = -\tfrac{1}{2} \log \kappa + \log \Gamma(\tfrac{\nu}{2}) - \tfrac{\nu}{2}\log \tfrac{\nu \sigma^2}{2}
\end{align*}
is the log normalizer of the normal inverse chi-squared distribution, and $c=\log (2\pi)^{-N/2}$ is constant with respect to the hyperparameters being optimized.

\subsection{Predictive Distributions}
Under this model, the predictive distribution of a data point given its cluster assignment is,
\begin{align*}
    p(x \mid y=k, X_{\mathsf{tr}}, y_{\mathsf{tr}})
    &= \prod_{d=1}^D \int p(x_d \mid \mu_{k,d}, \sigma_{k,d}^2) \, p(\mu_{k,d}, \sigma_{k,d}^2 \mid X_{\mathsf{tr}}, y_{\mathsf{tr}}) \dif \mu_{k,d} \dif \sigma_{k,d}^2 \\
    &= \prod_{d=1}^D \int \mathrm{N}(x_d \mid \mu_{k,d}, \sigma_{k,d}^2) \, \mathrm{NIX}(\mu_{k,d}, \sigma_{k,d}^2 \mid \nu_{k,d}', \sigma_{k,d}'^2, \kappa_{k,d}', \mu_{k,d}') \dif \mu_{k,d} \dif \sigma_{k,d}^2 \\
    &= \prod_{d=1}^D \mathrm{St}(x_d \mid \nu_{k,d}', \mu_{k,d}', \tfrac{\kappa_{k,d}' + 1}{\kappa_{k,d}'} \sigma_{k,d}'^2),
\end{align*}
where $\mathrm{St}(x \mid \nu, \mu, \sigma^2)$ denotes the univariate Student's t distribution with $\nu$ degrees of freedom, location $\mu$, and scale $\sigma$.

Under this model, the prior predictive distribution is,
\begin{align*}
    p(x \mid y=K+1, X_{\mathsf{tr}}, y_{\mathsf{tr}})
    &= \prod_{d=1}^D \mathrm{St}(x_d \mid \nu_{0,d}, \mu_{0,d}, \tfrac{\kappa_{0,d} + 1}{\kappa_{0,d}} \sigma_{0,d}^2),
\end{align*}
which is approximately Gaussian, $\mathrm{N}(x_d \mid \mu_{0,d}, \sigma_{0,d}^2)$ when $\nu_{0,d}, \kappa_{0,d} \gg 1$.

\section{EM Algorithm for the Coupled Diagonal Covariance Model}
\label{app:em-coupled}
This model introduces a scale factor $\gamma_k \in \reals_+$ that is shared by all dimensions. The model is,
\begin{align*}
    \gamma_k &\sim \chi^2(\alpha_0) \\
    \sigma_{k,d} &\sim \chi^{-2}(\nu_{0,d}, \gamma_k \sigma_{0,d}^2) & \text{for } d &=1,\ldots D  \\
    \mu_{k,d} &\sim \mathrm{N}(\mu_{0,d}, \kappa_{0,d}^{-1} \sigma_{k,d}^2) &  \text{for } d &=1,\ldots D  \\
\end{align*}
Since $\E[\gamma_k] = 1$, under the prior $\E[\sigma_{k,d}^2] = \tfrac{\nu_{0,d}}{\nu_{0,d}-2} \sigma_{0,d}^2$, which is approximately $\sigma_{0,d}^2$ for large $\nu_{0,d}$.

The hyperparameters of the model are $\eta = (\alpha_0, \{\nu_{0,d}, \sigma_{0,d}^2, \kappa_{0,d}, \mu_{0,d}\})$. We set the hyperparameters $\mu_{0,d}$ and $\sigma_{0,d}^2$ using empirical Bayes estimates,
\begin{align}
    \mu_{0,d} &= \hat{\mu}_{0,d} = \frac{1}{N} \sum_{n=1}^N x_{n,d} \\
    \sigma_{0,d}^2 &= \hat{\sigma}_{d}^2 = \frac{1}{N} \sum_{n=1}^N (x_{n,d} - \hat{\mu}_{y_n,d})^2
\end{align}
where $\hat{\mu}_{k} = \frac{1}{N_k} \sum_{n: y_n=k} x_n$ and $N_k = \sum_n \bbI[y_n = k]$.

To set the remaining hyperparameters, we use EM.

\subsection{E-step}
Note that the posterior distribution factors as,
\begin{align*}
    p(\gamma_k, \{\mu_{k,d}, \sigma_{k,d}^2\}_{d=1}^D \mid X_k)
    &= p(\gamma_k \mid X_k) \, p(\{\mu_{k,d}, \sigma_{k,d}^2\}_{d=1}^D \mid \gamma_k, X_k) \\
    &= p(\gamma_k \mid X_k) \prod_{d=1}^D p(\mu_{k,d}, \sigma_{k,d}^2 \mid \gamma_k, X_k).
\end{align*}

The posterior distribution over $\gamma_k$ doesn't have a simple closed form, but since it's only one-dimensional, we can approximate it on a dense grid of points, $\{\gamma^{(p)}\}_{p=1}^P$. Conditioned on $\gamma_k = \gamma^{(p)}$, the distribution of $\mu_{k,d}$ and $\sigma_{k,d}^2$ is a normal inverse chi-squared. For each point,
\begin{align*}
    p(\mu_{k,d}, \sigma_{k,d}^2 \mid \gamma_k = \gamma^{(p)}, X_k)
    &=\mathrm{NIX}(\mu_{k,d}, \sigma_{k,d}^2 \mid \nu_{k,d}', \sigma_{k,p,d}'^2, \kappa_{k,d}', \mu_{k,d}') \\
    \nu_{k,d}' &= \nu_{0,d} + N_{k} \\
    \kappa_{k,d}' &= \kappa_{0,d} + N_{k} \\
    \mu_{k,d}' &= \frac{1}{\kappa_{k,d}'} \left(\kappa_0 \mu_{0,d} + \sum_{n: y_n=k} x_n \right) \\
    \sigma_{k,p,d}'^2 &= \frac{1}{\nu_{k,d}'} \left[\nu_{0,d} \gamma^{(p)} \sigma_{0,d}^2 + \kappa_{0,d} \mu_{0,d}^2 + \sum_{n:y_n=k} x_{n,d}^2 - \kappa_{k,d}' \mu_{k,d}'^2 \right]
\end{align*}
Note that this is practically the same as above, but with $\gamma_k$ scaling the prior for $\sigma_{k,d}^2$.
For any value of $\gamma_k$, the posterior probability is,
\begin{align*}
    p(\gamma_k = \gamma^{(p)} \mid X_k)
    &\propto p(\gamma_k = \gamma^{(p)}) \, p(X_k \mid \gamma_k = \gamma^{(p)}) \\
    &= p(\gamma_k = \gamma^{(p)}) \prod_{d=1}^D p(X_{k,d} \mid \gamma_k = \gamma^{(p)}) \\
    &= p(\gamma_k = \gamma^{(p)}) \prod_{d=1}^D \frac{Z(\nu_{k,d}', \sigma_{k,p,d}'^2, \kappa_{k,d}', \mu_{k,d}')}{Z(\nu_{0,d}, \gamma^{(p)} \sigma_{0,d}^2, \kappa_{0,d}, \mu_{0,d})} \\
    &\triangleq \tilde{w}_{k,p}'.
\end{align*}
where we reused the marginal likelihood calculation from the hierarchical diagaonal DPMM above. Finally, denote the normalized posterior probabilities as,
\begin{align*}
    w_{k,p}' &= \frac{\tilde{w}_{k,p}'}{\sum_r \tilde{w}_{k,r}'}.
\end{align*}

\subsection{M-Step}
To set the hyperparameters, $\kappa_{0,d}$, $\nu_{0,d}$, and $\alpha_0$, we use EM. The expected log likelihood is separable over these two hyperparameters,
\begin{align*}
    \cL(\nu_{0,d}, \kappa_{0,d}, \alpha_0) &= \cL(\nu_{0,d}) + \cL(\kappa_{0,d}) + \cL(\alpha_0) \\
    \cL(\nu_{0,d})
    &= \E\left[\sum_{k=1}^K \log \chi^{-2}(\sigma_{k,d}^2 \mid \nu_{0,d}, \gamma_k \sigma_{0,d}^2) \right] \\
    \cL(\kappa_{0,d})
    &= \E \left[ \sum_{k=1}^K \log \mathrm{N}(\mu_{k,d} \mid \mu_{0,d}, \kappa_{0,d}^{-1} \sigma_{k,d}^2) \right] \\
    \cL(\alpha_0)
    &= \E \left[ \log \mathrm{Ga}(\gamma_k \mid \alpha_0, \alpha_0) \right]
\end{align*}
where the expectations are taken with respect to the posterior distribution over $\{\gamma_k, \{\mu_{k,d}, \sigma_{k,d}\}_{d=1}^D \}_{k=1}^K$.

\subsection{M-step for \texorpdfstring{$\nu_{0,d}$}{ν₀}}
Expanding the first objective yields,
\begin{align*}
    \cL(\nu_{0,d})
    &= \sum_{k=1}^K \E \left[ \log \left\{ \frac{(\tfrac{\nu_{0,d}}{2} \gamma_k \sigma_{0,d}^2)^{\tfrac{\nu_{0,d}}{2}}}{\Gamma(\frac{\nu_{0,d}}{2}) } (\sigma_{k,d}^2)^{-\frac{\nu_{0,d} + 2}{2}} e^{-\frac{\nu_{0,d} \gamma_k \sigma_{0,d}^2}{2 \sigma_{k,d}^2}} \right\}  \right] \\
    &= \sum_{k=1}^K  \tfrac{\nu_{0,d}}{2} \log \left(\tfrac{\nu_{0,d}}{2}\right) + \tfrac{\nu_{0,d}}{2} \E[\log \gamma_k] + \tfrac{\nu_{0,d}}{2} \log \sigma_{0,d}^2 - \log \Gamma(\tfrac{\nu_{0,d}}{2}) -\tfrac{\nu_{0,d} + 2}{2} \E[\log \sigma_{k,d}^2] - \tfrac{\nu_{0,d}}{2} \sigma_{0,d}^2 \E[\gamma_k \sigma_{k,d}^{-2}] \\
    &= \sum_{k=1}^K  \tfrac{\nu_{0,d}}{2} \log \left(\tfrac{\nu_{0,d}}{2}\right) + \tfrac{\nu_{0,d}}{2} \left[\E[\log \gamma_k] + \log \sigma_{0,d}^2 - \E[\log \sigma_{k,d}^2] - \sigma_{0,d}^2 \E[\gamma_k \sigma_{k,d}^{-2}]) \right] - \log \Gamma(\tfrac{\nu_{0,d}}{2}) + c.
\end{align*}
We can maximize this objective using a generalized Newton's method~\citep{minka2000beyond}.
We need the first and second derivatives of the objective,
\begin{align*}
    \cL'(\nu_{0,d})
    &= \sum_{k=1}^K  \tfrac{1}{2} \left[\log \left(\tfrac{\nu_{0,d}}{2}\right) + 1 \right] + \tfrac{1}{2} \left[\E[\log \gamma_k] +  \log \sigma_{0,d}^2 - \E[\log \sigma_{k,d}^2] - \sigma_{0,d}^2 \E[\gamma_k \sigma_{k,d}^{-2}]) \right] - \tfrac{1}{2} \psi(\tfrac{\nu_{0,d}}{2}) \\
    \cL''(\nu_{0,d})
    &= \sum_{k=1}^K  \tfrac{1}{2 \nu_{0,d}} - \tfrac{1}{4} \psi'(\tfrac{\nu_{0,d}}{2}).
\end{align*}
The idea is to lower bound the objective with a concave function of the form,
\begin{align*}
    g(\nu_{0,d}) = k + a \log \nu_{0,d} + b \nu_{0,d}
\end{align*}
which has derivatives $g'(\nu_{0,d}) = \tfrac{a}{\nu_{0,d}} + b$ and $g''(\nu_{0,d}) = -\tfrac{a}{\nu_{0,d}^2}$. Matching derivatives implies,
\begin{align*}
    a &= -\nu_{0,d}^2 \cL''(\nu_{0,d}) \\
    b &= \cL'(\nu_{0,d}) - \tfrac{a}{\nu_{0,d}} \\
    k &= \cL(\nu_{0,d}) - a\log \nu_{0,d} -b \nu_{0,d}.
\end{align*}
For $a > 0$ and $b<0$, the maximizer of the lower bound is obtained at
\begin{align}
    \nu_{0,d}^\star
    &= -\frac{a}{b} = \frac{\nu_{0,d}^2 \cL''(\nu_{0,d})}{\cL'(\nu_{0,d}) + \nu_{0,d} \cL''(\nu_{0,d})}
\end{align}

\subsection{M-step for \texorpdfstring{$\kappa_{0,d}$}{κ₀}}
Expanding the second objective,
\begin{align*}
    \cL(\kappa_{0,d}) &= \sum_{k=1}^K \tfrac{1}{2} \log \kappa_{0,d} - \tfrac{\kappa_{0,d}}{2} \E \left[\frac{(\mu_{k,d} - \mu_{0,d})^2}{\sigma_{k,d}^2}\right] + c.
\end{align*}
The maximum is obtained at,
\begin{align*}
    \kappa_{0,d}^\star &= \left(\frac{1}{K} \sum_{k=1}^K \E \left[\frac{(\mu_{k,d} - \mu_{0,d})^2}{\sigma_{k,d}^2}\right]\right)^{-1}.
\end{align*}

\subsection{M-step for \texorpdfstring{$\alpha_0$}{ɑ₀}}
Expanding the final objective,
\begin{align*}
    \cL(\alpha_0) &= \sum_{k=1}^K \alpha_0 \log \alpha_0 - \log \Gamma(\alpha_0) + \alpha_0 \E\left[ \log \gamma_k \right]- \alpha_0 \E\left[\gamma_k\right] + c.
\end{align*}
Its derivatives are,
\begin{align*}
    \cL'(\alpha_0) &= K\log \alpha_0 + K - K\psi(\alpha_0) + \sum_{k=1}^K \E\left[\log \gamma_k\right] - \E [\gamma_k] \\
    \cL''(\alpha_0) &= \frac{K}{\alpha_0} - K\psi'(\alpha_0)
\end{align*}
We can optimize $\alpha_0$ using the generalized Newton's method described in the M-step for $\nu_0$.

\subsection{Computing the posterior expectations}
To evaluate the objectives above, we need the following expected sufficient statistics of the normal inverse chi-squared distribution,
\begin{align*}
    \E[\gamma_k] &= \sum_p w_{k,p}' \gamma^{(p)} \\
    \E[\log \gamma_k] &= \sum_p w_{k,p}' \log \gamma^{(p)} \\
    \E[\gamma_k \sigma_{k,d}^{-2}] &= \E_{\gamma_k} [ \E_{\sigma_{k,d}^2 | \gamma_k} [\gamma_k \sigma_{k,d}^{-2}]]
    = \sum_p w_{k,p}' \gamma^{(p)} \sigma_{k,p,d}'^{-2} \\
    \E[\log \sigma_{k,d}^2] &=
    \E_{\gamma_k} [\E_{\sigma_{k,d}^2 | \gamma_k} [\log \sigma_{k,d}^2]]
    = \sum_p w_{k,p}' [\log \tfrac{\nu_{k,d}' \sigma_{k,p,d}'^2}{2} - \psi(\tfrac{\nu_{k,d}'}{2})] \\
    \E \left[\frac{(\mu_{k,d} - \mu_{0,d})^2}{\sigma_{k,d}^2}\right]
    &= \E_{\gamma_k} \left[\E_{\mu_{k,d}, \sigma_{k,d}^2 | \gamma_k} \left[\frac{(\mu_{k,d} -    \mu_{0,d})^2}{\sigma_{k,d}^2}\right] \right]
    = \sum_p w_{k,p}' \left[\frac{1}{\kappa_{k,d}'} +  \frac{(\mu_{k,d}' - \mu_{0,d})^2}{\sigma_{k,p,d}'^2} \right]
\end{align*}
where $\gamma^{(p)}$ are the centers of discretized posterior on $\gamma_k$ and $w_{k,p}$ are the corresponding weights.

Since the tied covariance approach in RMDS already works quite well, we recommend initializing the EM iterations by setting $\nu_{0,d} \approx \bar{N}_k$ and $\kappa_{0,d} \approx 0$. That way, the covariances are strongly coupled across clusters and the means have an uninformative prior.

\subsection{Computing the predictive distributions}
The prior predictive is,
\begin{multline*}
    p(x^\star; \eta_0) = \\
    \int  \left[\prod_{d=1}^D \int \mathrm{N}(x^\star_d \mid \mu_d, \sigma^2_d)
        \mathrm{NIX}(\mu_d, \sigma^2_d \mid \nu_{0,d}, \kappa_{0,d}, \mu_{0,d}, \gamma \sigma^2_{0,d}, \{x_n, y_n\}_{n=1}^N)
        \dif \mu_d \dif \sigma^2_d \right] \mathrm{Ga}(\gamma \mid \alpha_0, \alpha_0) \dif \gamma.
\end{multline*}
where $\eta_0 = \alpha_0, \{\mu_{0,d}, \sigma^2_{0,d}, \kappa_{0,d}, \nu_{0,d}\}_{d=1}^D$ are the model hyperparamters.

The $\gamma$ integral can be estimated by numerical integration over a dense grid of points,
\begin{align*}
p(x^\star; \eta_0)
&\approx \sum_{p=1}^P
    w_{0,p}
    \left[ \prod_{d=1}^D
    \int \mathrm{N}(x^\star_d \mid \mu_d, \sigma^2_d)
    \mathrm{NIX}(\mu_d, \sigma^2_d \mid \nu_{0,d}, \kappa_{0,d}, \mu_{0,d}, \gamma^{(p)} \sigma^2_{0,d}, \{x_n, y_n\}_{n=1}^N)
    \dif \mu_d \dif \sigma^2_d \right] \\
&= \sum_{p=1}^P
    w_{0,p} \,
    \left[ \prod_{d=1}^D
    \mathrm{St}(x^\star_d \mid \nu_{0,d}, \mu_{0,d},
    \tfrac{\kappa_{0,d}+1}{\kappa_{0,d}} \gamma^{(p)}\sigma^2_{0,d}) \right]
\end{align*}
where,
\begin{align*}
    w_{0, p} = \frac{\mathrm{Ga}(\gamma^{(p)} \mid \alpha_0, \alpha_0) \Delta\gamma^{(p)}}{\sum_r \mathrm{Ga}(\gamma^{(r)} \mid \alpha_0, \alpha_0) \Delta\gamma^{(r)}}.
\end{align*}
We renormalize the weights to ensure that the numerical integration satisfies that $\E_\gamma[1] = 1$.

In practice, we evaluate the prior \textit{log} predictive probability using a log-sum-exp,
\begin{align*}
   \log p(x^\star; \eta_0)
   \approx \mathrm{logsumexp}_p \left[ \log w_{0,p} + \sum_{d=1}^D \log \mathrm{St}(x_d^\star \mid \nu_{0,d}, \mu_{0,d}, \tfrac{\kappa_{0,d} + 1}{\kappa_{0,d}} \gamma^{(p)} \sigma_{0,d}^2) \right]
\end{align*}

By the same logic, the posterior log predictive is,
\begin{align*}
   \log p(x^\star \mid y^\star=k, \{x_n, y_n\}_{n=1}^N; \eta_0)
   \approx \mathrm{logsumexp}_p \left[ \log w_{k,p}' + \sum_{d=1}^D \log \mathrm{St}(x_d^\star \mid \nu_{k,d}', \mu_{k,d}', \tfrac{\kappa_{k,d}'+ 1}{\kappa_{k,d}'} \gamma^{(p)} \sigma_{k,p,d}'^2) \right]
\end{align*}

\subsection{Marginal Likelihood}
This EM algorithm maximizes the marginal likelihood,
\begin{align*}
    \log p(\{x_n, y_n\}_{n=1}^N)
    &= \sum_{k=1}^K \log p(\{x_n: y_n =k\})  \\
    &= \sum_{k=1}^K \log \int p(\{x_n: y_n=k\} \mid \{\mu_{k,d}, \sigma_{k,d}^2\}) \ p(\{\mu_{k,d} \sigma_{k,d}^2\} \mid \gamma_k) \, p(\gamma_k) \dif \mu_{k,d} \dif \sigma_{k,d}^2 \dif \gamma_k \\
    &= \sum_k \log \int \left[ \prod_{d=1}^D \int \left[ \prod_{n:y_n=k} p(x_{n,d} \mid \mu_{k,d}, \sigma_{k,d}^2) p(\mu_{k,d}, \sigma_{k,d}^2 \mid \gamma_k) \dif \mu_{k,d} \dif \sigma_{k,d}^2 \right] p(\gamma_k) \dif \gamma_k \right] \\
    &= \sum_k \log \int \prod_{d=1}^D \left[\frac{Z(\nu_{k,d}', \sigma_{k,p,d}'^2, \kappa_{k,d}', \mu_{k,d}')}{Z(\nu_{0,d}, \gamma_k \sigma_{0,d}^2, \kappa_{0,d}, \mu_{0,d})} \right] p(\gamma_k) \dif \gamma_k + c \\
    &\approx \sum_k \log \sum_p w_{0,p} \prod_{d=1}^D \frac{Z(\nu_{k,d}', \sigma_{k,p,d}'^2, \kappa_{k,d}', \mu_{k,d}')}{Z(\nu_{0,d}, \gamma^{(p)} \sigma_{0,d}^2, \kappa_{0,d}, \mu_{0,d})}  + c \\
    &= \sum_k \mathrm{logsumexp}_p \left[ \ell_{k,p} \right] + c
\end{align*}
where
\begin{align*}
    \ell_{k,p}
    &\triangleq \log w_{0,p} + \sum_{d=1}^D \left(\log Z(\nu_{k,d}', \sigma_{k,p,d}'^2, \kappa_{k,d}', \mu_{k,d}') - \log Z(\nu_{0,d}, \gamma^{(p)} \sigma_{0,d}^2, \kappa_{0,d}, \mu_{0,d}) \right)
\end{align*}
and $c = \tfrac{ND}{2} \log 2\pi$.
\clearpage
\section{Score Correlation between RMDS and the Tied Covariance Model}
\label{app:rmds_dpmm_corr}
We observe that the DPMM model with shared covariance and the RMDS~\cite{ren21rmds} are highly correlated, as illustrated in~\Cref{fig:mds-tied-hist}. Here, we plot the RMDS vs the Tied Covariance Gaussian DPMM for all the real datasets (differentiated by color) in the Imagenet-1K OpenOOD task and note the tight agreement between the two.
This empirical result supports the theoretical relationship derived in~\cref{prop:rmds_tied_dpmm}.
\begin{figure}[ht]
    \centering
    \includegraphics[width=4in]{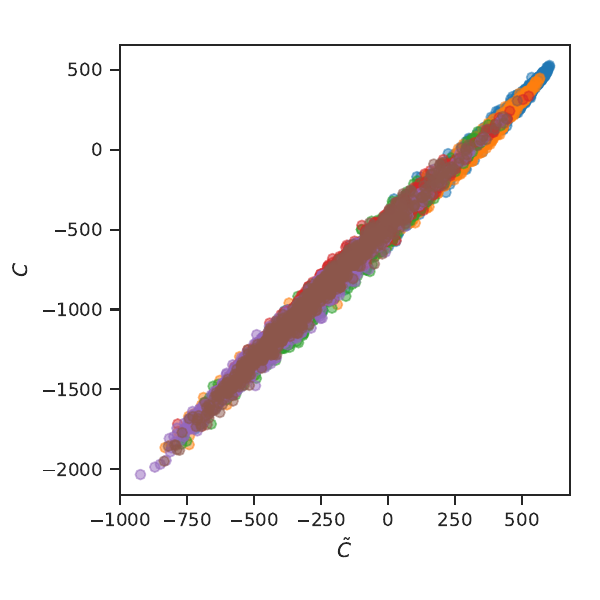}
    \caption{Tied DPMM OOD score, $\tilde{C}$, correlation to RMDS~\cite{ren21rmds} score,  $C$, on the Imagenet-1K dataset. The colors represent different ID or OOD datasets.}
    \label{fig:mds-tied-hist}
\end{figure}

\clearpage
\section{Imagenet-1K Experiment}
\label{app:ablation}

\begin{table}[!h]
\caption{OpenOOD performance across different preprocessing methods for expectation maximization trained hierarchical DPMM models. The preprocessing methods are the raw ViT features (ViT), marginal covariance whitening followed by a rotation into the average class-covariance eigenspace (W\&R), and PCA. Baselines: Mahalanobis distance score to the closely related Mahalanobis distance score methods, MDS~\citep{lee18mds} and RMDS~\citep{ren21rmds}. We also compare to maximum softmax probability (MSP)~\citep{HendrycksD17} and temperature scaled MSP with $T=1000$ (Temp. Scale)~\citep{guo17tempscale} which is the ODIN~\citep{LiangS18} method without input preprocessing. A single linear layer was trained with gradient descent and supervised cross-entropy loss for the MSP and ODIN methods. Accuracy values marked with a asterisk* are the ViT classification layer's performance not the method itself. Additional benchmark performance metrics were obtained from the OpenOODv1.5 leaderboard~\citep{zhang23openood15} including ASH~\citep{djurisic2023extremely} and ReAct~\citep{sun21react}.}
\begin{small}
\begin{center}
\resizebox{\linewidth}{!}{
\begin{tabular}{llcccccccc}
\toprule
& & & \multicolumn{3}{c}{Near} & \multicolumn{4}{c}{Far} \\
\cmidrule(lr){4-6}\cmidrule(lr){7-10}
Model & Pre  & Accuracy & SSB Hard & NINCO & Avg. & iNaturalist & OpenImage-O & Textures & Avg. \\
\midrule
MSP & ViT & \textbf{80.94} & 73.80 & 82.72 & 78.26 & 92.08 & 88.67 & 88.39 & 89.71 \\
    & W\&R & 80.90 & 71.74 & 79.87 & 75.80 & 88.65 & 85.62 & 84.64 & 86.30 \\
    & PCA & 80.91 & 73.67 & 82.94 & 78.31 & 92.08 & 88.60 & 88.34 & 89.67 \\
\midrule
Temp.   & ViT & \textbf{80.94} & \textbf{75.60} & 84.36 & 79.98 & 94.22 & 90.82 & \textbf{90.82} & 91.95 \\
MSP     & W\&R & 80.90 & 73.29 & 81.28 & 77.29 & 91.24 & 87.82 & 86.81 & 88.62 \\
T=1000  & PCA & 80.91 & 75.24 & 84.77 & 80.00 & 94.25 & 90.78 & 90.75 & 91.93 \\
\midrule
MDS & ViT & 80.22 & 71.45 & 86.44 & 78.94 & 95.96 & 92.33 & 89.37 & 92.55 \\
    & W\&R & 80.41 & 71.45 & 86.48 & 78.97 & 96.00 & 92.34 & 89.38 & 92.57 \\
    & PCA & 80.41 & 71.45 & 86.48 & 78.97 & 96.00 & 92.34 & 89.38 & 92.57 \\
\midrule
RMDS & ViT & 80.22 & 72.78 & 87.18 & 79.98 & 96.00 & 92.23 & 89.28 & 92.50 \\
     & W\&R & 80.41 & 72.79 & 87.28 & 80.03 & \textbf{96.09} & 92.29 & 89.38 & 92.59 \\
     & PCA & 80.41 & 72.79 & 87.28 & 80.03 & \textbf{96.09} & 92.29 & 89.38 & 92.59 \\
\midrule
ASH & ViT & 81.14* & --- & --- & 53.21 & --- & --- & --- & 51.56 \\
\midrule
ReAct & ViT & 81.14* &  &  & 69.26 & & & & 85.69 \\
\midrule
\multicolumn{10}{c}{Hierarchical Gaussian DPMMs}  \\
\midrule
Tied  & ViT & 80.40 & 71.79 & 86.75 & 79.27 & 95.99 & \textbf{92.40} & 89.71 & \textbf{92.70} \\
      & W\&R & 80.41 & 71.80 & 86.76 & 79.28 & 96.00 & \textbf{92.40} & 89.72 & \textbf{92.70} \\
      & PCA & 80.40 & 71.79 & 86.75 & 79.27 & 96.00 & \textbf{92.40} & 89.70 & \textbf{92.70} \\
\midrule
Full & ViT & 76.82 & 62.64 & 78.32 & 70.48 & 85.76 & 84.95 & 88.03 & 86.24 \\
 & W\&R & 76.78 & 62.84 & 78.48 & 70.66 & 85.88 & 85.03 & 88.02 & 86.31 \\
 & PCA & 76.82 & 62.64 & 78.33 & 70.49 & 85.76 & 84.95 & 88.03 & 86.25 \\
\midrule
Diag. & ViT & 75.96 & 72.38 & 85.96 & 79.17 & 94.14 & 90.18 & 87.20 & 90.51 \\
      & W\&R & 76.54 & 73.89 & 87.32 & 80.60 & 95.36 & 90.78 & 86.42 & 90.85 \\
      & PCA & 75.76 & 71.99 & 85.52 & 78.75 & 93.91 & 90.18 & 87.39 & 90.49 \\
\midrule
Coupled & ViT & 75.93 & 72.80 & 86.15 & 79.48 & 94.08 & 90.20 & 87.19 & 90.49 \\
Diag.   & W\&R & 76.52 & 74.47 & \textbf{87.48} & \textbf{80.98} & 95.51 & 90.63 & 86.02 & 90.72 \\
        & PCA & 75.76 & 72.40 & 85.97 & 79.19 & 95.02 & 90.92 & 88.09 & 91.34 \\
\bottomrule
\end{tabular}
} 
\end{center}
\end{small}
\end{table}

\begin{table*}[!ht]
\caption{Performance of Hierarchical Gaussian DPMM and baseline methods on the OpenOOD benchmark datasets~\cite{yang2022openood,zhang23openood15}, including both Near (SSB Hard~\citep{vaze22ssb} and NINCO~\citep{bitterwolf2023ninco}) and Far (iNaturalist~\citep{inaturalist}, OpenImage-O~\citep{openimageo}, and Textures~\citep{textures}) OOD datasets. The first column reports the accuracy of the classifiers on predicting the label $y \in [K]$ for in-distribution test data. Other columns report AUROC scores for OOD detection on OpenOOD benchmark datasets.
}
\label{tab:openood-main}
\begin{small}
\begin{center}
\resizebox{\linewidth}{!}{
\begin{tabular}{lcccccccc}
\toprule
 & & \multicolumn{3}{c}{Near} & \multicolumn{4}{c}{Far} \\
\cmidrule(lr){3-5}\cmidrule(lr){6-9}
Method & Accuracy & SSB Hard & NINCO & Avg. & iNaturalist & OpenImage O & Textures & Avg. \\
\midrule
MSP & \textbf{80.90} & 71.74 & 79.87 & 75.80 & 88.65 & 85.62 & 84.64 & 86.30 \\
Temp. Scale & \textbf{80.90} & 73.29 & 81.28 & 77.29 & 91.24 & 87.82 & 86.81 & 88.62 \\
MDS & 80.41 & 71.45 & 86.48 & 78.97 & 96.00 & 92.34 & 89.38 & 92.57 \\
RMDS & 80.41 & 72.79 & 87.28 & 80.03 & \textbf{96.09} & 92.29 & 89.38 & 92.59 \\
\midrule
\multicolumn{9}{c}{Hierarchical Gaussian DPMMs}\\
\midrule
Tied & 80.41 & 71.80 & 86.76 & 79.28 & 96.00 & \textbf{92.40} & \textbf{89.72} & \textbf{92.70} \\
Full & 76.78 & 62.84 & 78.48 & 70.66 & 85.88 & 85.03 & 88.02 & 86.31 \\
Diagonal & 76.54 & 73.89 & 87.32 & 80.60 & 95.36 & 90.78 & 86.42 & 90.85 \\
Coupled Diag. & 76.52 & \textbf{74.47} & \textbf{87.48} & \textbf{80.98} & 95.51 & 90.63 & 86.02 & 90.72 \\
\bottomrule
\end{tabular}
}
\end{center}
\end{small}
\vspace{-1em}
\end{table*}

\clearpage
\section{CIFAR-10 Experiment}
\label{app:cifar10-ablation}
\begin{table}[!h]
\caption{OpenOOD CIFAR 10 performance across different preprocessing methods for expectation maximization trained hierarchical DPMM models. The preprocessing methods are the raw ResNet18 features and marginal covariance whitening followed by a rotation into the average class-covariance eigenspace (W\&R). Baselines: Mahalanobis distance score to the closely related Mahalanobis distance score methods, MDS~\citep{lee18mds} and RMDS~\citep{ren21rmds}. We also compare to maximum softmax probability (MSP)~\citep{HendrycksD17} and temperature scaled MSP with $T=1000$ (Temp. Scale)~\citep{guo17tempscale} which is the ODIN~\citep{LiangS18} method without input preprocessing. A single linear layer was trained with gradient descent and supervised cross-entropy loss for the MSP and ODIN methods. Standard deviation across 3 ResNet18 training runs with different seeds is provided in parentheses and {\tiny smaller font}. Additional benchmark performance metrics were obtained from the OpenOODv1.5 leaderboard~\citep{zhang23openood15} including ASH~\citep{djurisic2023extremely}, ReAct~\citep{sun21react}, SCALE~\citep{xu2024scaling}, and ReweightOOD~\citep{regmi24}. \dag: Alternative training OOD method, *: Accuracy value inherited from ResNet18's classifier.}
\begin{scriptsize}
\begin{center}
\resizebox{\linewidth}{!}{
\begin{tabular}{llccccccccc}
\toprule
& & & \multicolumn{3}{c}{Near} & \multicolumn{5}{c}{Far} \\
\cmidrule(lr){4-6}\cmidrule(lr){7-11}
Model & Pre  & Accuracy & CIFAR 100 & Tiny Imagenet & Avg. & MNIST & Places365 & SVHN & Textures & Avg. \\
\midrule
MSP & ResNet18 & 95.01\tiny{(0.29)} & 87.24\tiny{(0.44)} & 88.92\tiny{(0.34)} & 88.08\tiny{(0.38)} & 92.68\tiny{(1.32)} & 89.42\tiny{(0.71)} & 91.46\tiny{(0.66)} & 89.76\tiny{(0.95)} & 90.83\tiny{(0.38)} \\
    & W\&R & 94.93\tiny{(0.27)} & 87.38\tiny{(0.55)} & 89.33\tiny{(0.36)} & 88.36\tiny{(0.45)} & 94.21\tiny{(1.04)} & 88.49\tiny{(1.23)} & 93.49\tiny{(1.28)} & 91.00\tiny{(0.51)} & 91.80\tiny{(0.99)} \\
\midrule
Temp.   & ResNet18 & 95.01\tiny{(0.29)} & 86.51\tiny{(0.72)} & 88.78\tiny{(0.57)} & 87.65\tiny{(0.64)} & 94.07\tiny{(2.12)} & 89.57\tiny{(0.87)} & 91.88\tiny{(1.27)} & 89.14\tiny{(1.19)} & 91.16\tiny{(0.79)}\\
MSP   & W\&R & 94.93\tiny{(0.27)}  & 87.42\tiny{(0.73)} & 89.43\tiny{(0.48)} & 88.43\tiny{(0.60)} & 94.18\tiny{(1.34)} & 88.59\tiny{(1.32)} & \textbf{93.51}\tiny{(1.95)} & 91.08\tiny{(0.80)} & 91.84\tiny{(1.32)} \\

\midrule
MDS & ResNet18 & 95.01\tiny{(0.29)} & 83.59\tiny{(2.27)} & 84.97\tiny{(2.53)} & 84.28\tiny{(2.40)} & 90.10\tiny{(2.41)} & 84.90\tiny{(2.54)} & 91.17\tiny{(0.48)} & 92.69\tiny{(1.05)} & 89.72\tiny{(1.35)} \\
    & W\&R & \textbf{95.04}\tiny{(0.28)} & 84.63\tiny{(2.17)} & 86.19\tiny{(2.46)} & 85.41\tiny{(2.31)} & 91.45\tiny{(1.43)} & 86.97\tiny{(2.16)} & 90.19\tiny{(0.59)} & 92.00\tiny{(1.27)} & 90.15\tiny{(1.16)} \\
\midrule
RMDS & ResNet18 & 95.01\tiny{(0.29)} & 88.84\tiny{(0.35)} & 90.83\tiny{(0.28)} & 89.83\tiny{(0.28)} & 93.23\tiny{(0.79)} & 91.51\tiny{(0.11)} & 91.96\tiny{(0.26)} & 92.23\tiny{(0.23)} & 92.23\tiny{(0.21)} \\
& W\&R & \textbf{95.04}\tiny{(0.28)} & 88.83\tiny{(0.33)} & 90.83\tiny{(0.28)}  & 89.83\tiny{(0.28)} & 93.67\tiny{(0.90)} & 91.57\tiny{(0.10)} & 92.25\tiny{(0.53)} & 92.20\tiny{(0.26)} & 92.42\tiny{(0.30)} \\
\midrule
ASH & ResNet18 & 95.06* & ---   &  ---  & 75.27 &  --- &  --- &  --- &  --- & 78.49 \\
\midrule
ReAct & ResNet18 & 95.06* &  --- & ---  & 87.11 & ---  & ---  & ---  & ---  & 90.42 \\
\midrule
SCALE & ResNet18 & 95.06* & ---  & ---  & 82.55 & ---  & ---  &  --- &  --- & 86.39 \\
\midrule
ReweightOOD\dag & ResNet18 & --- & --- &--- & \textbf{91.86} & --- & --- &--- & --- & \textbf{98.05}\\
\midrule
\multicolumn{10}{c}{Hierarchical Gaussian DPMMs}  \\
\midrule
Tied & ResNet18 & 95.02\tiny{(0.28)} & 88.53\tiny{(0.38)} & 90.40\tiny{(0.58)} & 89.47\tiny{(0.48)} & 93.91\tiny{(0.26)} & 90.83\tiny{(0.51)} & 93.51\tiny{(0.39)} & 94.52\tiny{(0.14)} & 93.19\tiny{(0.09)} \\
     & W\&R     & \textbf{95.04}\tiny{(0.28)} & 88.83\tiny{(0.33)} & 90.83\tiny{(0.28)} & 89.83\tiny{(0.28)} & 93.67\tiny{(0.90)} & 91.57\tiny{(0.10)} & 92.25\tiny{(0.53)} & 92.20\tiny{(0.26)} & 92.42\tiny{(0.30)} \\
\midrule
Full & ResNet18 & 95.00\tiny{(0.25)} & 89.36\tiny{(0.10)} & 91.46\tiny{(0.07)} & 90.41\tiny{(0.05)} & \textbf{94.71}\tiny{(0.76)} & 91.06\tiny{(0.32)} & 93.36\tiny{(0.25)} & 92.39\tiny{(0.27)} & 92.88\tiny{(0.30)} \\
     & W\&R     & 94.95\tiny{(0.26)} & \textbf{89.69}\tiny{(0.13)} & \textbf{91.57}\tiny{(0.32)} & \textbf{90.63}\tiny{(0.22)} & 94.32\tiny{(0.47)} & \textbf{91.95}\tiny{(0.20)} & 93.37\tiny{(0.43)} & \textbf{94.35}\tiny{(0.13)} & \textbf{93.50}\tiny{(0.20)} \\
\midrule
Diag. & ResNet18 & 94.84\tiny{(0.26)} & 89.07\tiny{(0.16)} & 90.93\tiny{(0.26)} & 90.00\tiny{(0.21)} & 92.63\tiny{(0.42)} & 91.34\tiny{(0.49)} & 91.13\tiny{(0.48)} & 92.11\tiny{(0.49)} & 91.80\tiny{(0.39)} \\
      & W\&R & 94.76\tiny{(0.32)} & 88.01\tiny{(0.26)} & 90.27\tiny{(0.33)} & 89.14\tiny{(0.29)} & 91.50\tiny{(0.75)} & 91.70\tiny{(0.40)} & 88.07\tiny{(0.81)} & 92.16\tiny{(0.06)} & 90.86\tiny{(0.34)} \\
\midrule
Coupled & ResNet18 & 94.84\tiny{(0.26)} & 89.13\tiny{(0.18)} & 90.98\tiny{(0.27)} & 90.06\tiny{(0.22)} & 92.82\tiny{(0.50)} & 91.47\tiny{(0.40)} & 91.31\tiny{(0.57)} & 92.16\tiny{(0.56)} & 91.94\tiny{(0.42)} \\
Diag. & W\&R & 94.76\tiny{(0.32)} & 87.17\tiny{(0.19)} & 89.43\tiny{(0.26)} & 88.30\tiny{(0.23)} & 91.11\tiny{(0.07)} & 90.84\tiny{(0.35)} & 88.07\tiny{(0.49)} & 92.78\tiny{(0.17)} & 90.70\tiny{(0.20)} \\
\bottomrule
\end{tabular}
}
\end{center}
\end{scriptsize}
\end{table}

\begin{table}[!h]
\caption{Performance of Hierarchical Gaussian DPMM and baseline methods on the OpenOOD CIFAR-10 benchmark.}
\begin{small}
\begin{center}
\resizebox{\linewidth}{!}{
\begin{tabular}{lccccccccc}
\toprule
& & \multicolumn{3}{c}{Near} & \multicolumn{5}{c}{Far} \\
\cmidrule(lr){3-5}\cmidrule(lr){6-10}
Model & Accuracy & CIFAR 100 & Tiny Imagenet & Avg. & MNIST & Places365 & SVHN & Textures & Avg. \\
\midrule
MSP & 94.93 & 87.38 & 89.33 & 88.36 & 94.21 & 88.49 & 93.49 & 91.00 & 91.80 \\
Temp. MSP & 94.93 & 87.42 & 89.43 & 88.43 & 94.18 & 88.59 & \textbf{93.51} & 91.08 & 91.84 \\
MDS & \textbf{95.04} & 84.63 & 86.19 & 85.41 & 91.45 & 86.97 & 90.19 & 92.00 & 90.15 \\
RMDS & \textbf{95.04} & 88.83 & 90.83 & 89.83 & 93.67 & 91.57 & 92.25 & 92.20 & 92.42 \\
\midrule
\multicolumn{10}{c}{Hierarchical Gaussian DPMMs}  \\
\midrule
Tied & \textbf{95.04} & 88.83 & 90.83 & 89.83 & 93.67 & 91.57 & 92.25 & 92.20 & 92.42 \\
Full  & 94.95 & \textbf{89.69} & \textbf{91.57} & \textbf{90.63} & 94.32 & \textbf{91.95} & 93.37 & \textbf{94.35} & \textbf{93.50} \\
Diag. & 94.76 & 88.01 & 90.27 & 89.14 & 91.50 & 91.70 & 88.07 & 92.16 & 90.86 \\
Coupled Diag. & 94.76 & 87.17 & 89.43 & 88.30 & 91.11 & 90.84 & 88.07 & 92.78 & 90.70 \\
\bottomrule
\end{tabular}
} 
\end{center}
\end{small}
\end{table}

\clearpage
\section{CIFAR-100 Experiment}
\label{app:cifar100-ablation}
\begin{table}[!h]
\caption{OpenOOD CIFAR 100 performance across different preprocessing methods for expectation maximization trained hierarchical DPMM models. The preprocessing methods are the raw ResNet18 features and marginal covariance whitening followed by a rotation into the average class-covariance eigenspace (W\&R). Baselines: Mahalanobis distance score to the closely related Mahalanobis distance score methods, MDS~\citep{lee18mds} and RMDS~\citep{ren21rmds}. We also compare to maximum softmax probability (MSP)~\citep{HendrycksD17} and temperature scaled MSP with $T=1000$ (Temp. Scale)~\citep{guo17tempscale} which is the ODIN~\citep{LiangS18} method without input preprocessing. A single linear layer was trained with gradient descent and supervised cross-entropy loss for the MSP and ODIN methods. Additional benchmark performance metrics were obtained from the OpenOODv1.5 leaderboard~\citep{zhang23openood15} including ASH~\citep{djurisic2023extremely}, ReAct~\citep{sun21react}, SCALE~\citep{xu2024scaling}, and ReweightOOD~\citep{regmi24}. \dag: Alternative training OOD method, *: Accuracy value inherited from ResNet18's classifier.}
\begin{scriptsize}
\begin{center}
\resizebox{\linewidth}{!}{
\begin{tabular}{llccccccccc}
\toprule
& & & \multicolumn{3}{c}{Near} & \multicolumn{5}{c}{Far} \\
\cmidrule(lr){4-6}\cmidrule(lr){7-11}
Model & Pre  & Accuracy & CIFAR 10 & Tiny Imagenet & Avg. & MNIST & Places365 & SVHN & Textures & Avg. \\
\midrule
MSP     & ResNet18 & \textbf{76.91}{\tiny (0.11)} & 78.66{\tiny (0.15)} & 81.98{\tiny (0.16)} & 80.32{\tiny (0.15)} & 75.76{\tiny (2.51)} & 79.16{\tiny (0.19)} & 79.20{\tiny (0.96)}& 77.62{\tiny (0.58)} & 77.94{\tiny (0.65)} \\
        & W\&R & 76.19{\tiny (0.04)} & 78.48{\tiny (0.03)} & 82.19{\tiny (0.18)} & 80.33{\tiny (0.10)} & 77.01{\tiny (2.10)} & 79.72{\tiny (0.09)} & 80.60{\tiny (1.21)} & 78.18{\tiny (0.40)} & 78.88{\tiny (0.35)} \\
\midrule
Temp.   & ResNet18 & \textbf{76.91}{\tiny (0.11)} & \textbf{79.16}{\tiny (0.09)} & 82.25{\tiny (0.01)} & 80.71{\tiny (0.05)} & 77.94{\tiny (1.52)} & 78.69{\tiny (0.19)} & 81.10{\tiny (1.43)} & 78.07{\tiny (0.77)} & 78.95{\tiny (0.48)} \\
MSP     & W\&R & 76.19{\tiny (0.04)} & 78.87{\tiny (0.02)} & 82.27{\tiny (0.16)} & 80.57{\tiny (0.07)} & 77.41{\tiny (1.95)} & 80.06{\tiny (0.05)} & 81.72{\tiny (0.80)} & 77.81{\tiny (0.40)} & 79.25{\tiny (0.32)} \\
\midrule
MDS     & ResNet18 & 76.10{\tiny (0.09)} & 55.87{\tiny (0.22)} & 61.84{\tiny (0.19)} & 58.86{\tiny (0.08)} & 67.47{\tiny (0.81)} & 63.18{\tiny (0.50)} & 70.24{\tiny (6.51)} & 76.26{\tiny (0.69)} & 69.29{\tiny (1.42)} \\
        & W\&R & 76.10{\tiny (0.09)} & 55.87{\tiny (0.22)} & 61.84{\tiny (0.19)} & 58.86{\tiny (0.08)} & 67.47{\tiny (0.81)} & 63.18{\tiny (0.50)} & 70.24{\tiny (6.51)} & 76.26{\tiny (0.69)} & 69.29{\tiny (1.42)} \\
\midrule
RMDS    & ResNet18 & 76.10{\tiny (0.09)} & 77.75{\tiny (0.19)}  & 82.58{\tiny (0.02)} & 80.17{\tiny (0.09)} & 79.74{\tiny (2.49)} & 83.40{\tiny (0.46)} & 85.10{\tiny (1.06)} & 83.65{\tiny (0.52)} & 82.97{\tiny (0.42)} \\
        & W\&R & 76.10{\tiny (0.09)} & 77.75{\tiny (0.19)} & 82.58{\tiny (0.02)} & 80.17 {\tiny (0.09)} & 79.74{\tiny (2.49)} & 83.40{\tiny (0.46)} & 85.10{\tiny (1.06)} & 83.65{\tiny (0.52)} & 82.97{\tiny (0.42)} \\
\midrule
ASH & ResNet18 & 77.25* & --- & --- & 78.20 & --- & --- &--- &--- & 80.58 \\
\midrule
ReAct & ResNet18 & 77.25* &--- & ---& 80.77 &--- &---&--- &--- & 80.39 \\
\midrule
SCALE & ResNet18 & 77.26* &--- &--- & \textbf{80.99} & ---& ---&--- &--- & 81.42\\
\midrule
ReweightOOD\dag & ResNet18 & --- &--- &--- & 71.27 &--- &--- &--- & ---& \textbf{91.12}\\
\midrule
\multicolumn{10}{c}{Hierarchical Gaussian DPMMs}  \\
\midrule
Tied & ResNet18 & 76.11{\tiny (0.10)} & 77.67{\tiny (0.19)} & 82.56{\tiny (0.02)} & 80.11{\tiny (0.09)} & 79.82{\tiny (2.47)} & 83.38{\tiny (0.47)} & \textbf{85.17}{\tiny (1.13)} & 83.88{\tiny (0.52)} & 83.07{\tiny (0.44)}\\
     & W\&R     & 76.10{\tiny (0.09)} & 77.75{\tiny (0.19)} & 82.59{\tiny (0.02)} & 80.17{\tiny (0.09)} & 79.75{\tiny (2.49)} & \textbf{83.41}{\tiny (0.46)} & 85.11{\tiny (1.06)} & 83.65{\tiny (0.52)} & 82.98{\tiny (0.42)} \\
\midrule
Full    & ResNet18 & 76.79{\tiny (0.22)} & 76.81{\tiny (0.15)} & \textbf{82.97}{\tiny (0.15)} & 79.89{\tiny (0.05)} & \textbf{82.10}{\tiny (1.98)} & 79.16{\tiny (0.13)} & 81.22{\tiny (2.34)} & 82.07{\tiny (0.61)} & 81.14{\tiny (0.19)} \\
        & W\&R     & 76.64{\tiny (0.14)} & 76.04{\tiny (0.18)} & 82.40{\tiny (0.16)} & 79.22{\tiny (0.06)} & 82.10{\tiny (1.90)} & 78.82{\tiny (0.16)} & 81.20{\tiny (2.78)} & 82.66{\tiny (0.61)} & 81.20{\tiny (0.30)} \\
\midrule
Diag.   & ResNet18 & 75.54{\tiny (0.10)} & 74.31{\tiny (0.39)} & 81.50{\tiny (0.31)} & 77.91{\tiny (0.32)} & 79.12{\tiny (1.99)} & 78.93{\tiny (0.48)} & 82.20{\tiny (2.31)} & 84.02{\tiny (0.76)} & 81.07{\tiny (0.48)} \\
        & W\&R     & 76.07{\tiny (0.19)} & 76.30{\tiny (0.28)} & 82.13{\tiny (0.04)} & 79.22{\tiny (0.13)} & 81.46{\tiny (2.08)} & 81.79{\tiny (0.48)} & 84.98{\tiny (1.65)} & 83.31{\tiny (0.54)} & 82.88{\tiny (0.59)}\\
\midrule
Coupled & ResNet18 & 75.54{\tiny (0.08)} & 75.80{\tiny (0.35)}  & 82.77{\tiny (0.25)} & 79.28{\tiny (0.28)} & 79.25{\tiny (1.79)} & 80.29{\tiny (0.39)} & 83.29{\tiny (1.63)} & \textbf{84.77}{\tiny (0.67)} & 81.90{\tiny (0.43)} \\
Diag.   & W\&R     & 76.04{\tiny (0.25)} & 76.18{\tiny (0.10)} & 79.93{\tiny (0.33)} & 78.05{\tiny (0.14)} & 75.97{\tiny (2.92)} & 79.80{\tiny (0.23)} & 80.58{\tiny (0.05)} & 80.81{\tiny (0.21)} & 79.29{\tiny (0.74)} \\
\bottomrule
\end{tabular}
} 
\end{center}
\end{scriptsize}
\end{table}

\begin{table}[!h]
\caption{Performance of the Hierarchical Gaussian DPMM and baseline methods on the OpenOOD CIFAR-100 benchmark.}
\begin{scriptsize}
\begin{center}
\begin{tabular}{lccccccccc}
\toprule
& & \multicolumn{3}{c}{Near} & \multicolumn{5}{c}{Far} \\
\cmidrule(lr){3-5}\cmidrule(lr){6-10}
Model & Accuracy & CIFAR 10 & Tiny Imagenet & Avg. & MNIST & Places365 & SVHN & Textures & Avg. \\
\midrule
MSP  & 76.19 & 78.48 & 82.19 & 80.33 & 77.01 & 79.72 & 80.60 & 78.18 & 78.88 \\
Temp. MSP & 76.19 & \textbf{78.87} & 82.27 & \textbf{80.57} & 77.41 & 80.06 & 81.72 & 77.81 & 79.25 \\
MDS & 76.10 & 55.87 & 61.84 & 58.86 & 67.47 & 63.18 & 70.24 & 76.26 & 69.29 \\
RMDS & 76.10 & 77.75 & 82.58 & 80.17 & 79.74 & 83.40 & 85.10 & \textbf{83.65} & 82.97 \\
\midrule
\multicolumn{10}{c}{Hierarchical Gaussian DPMMs}  \\
\midrule
Tied & 76.10 & 77.75 & \textbf{82.59} & 80.17 & 79.75 & \textbf{83.41} & \textbf{85.11} & \textbf{83.65} & \textbf{82.98} \\
Full & \textbf{76.64} & 76.04 & 82.40 & 79.22 & \textbf{82.10} & 78.82 & 81.20 & 82.66 & 81.20 \\
Diag. & 76.07 & 76.30 & 82.13 & 79.22 & 81.46 & 81.79 & 84.98 & 83.31 & 82.88 \\
Coupled Diag. & 76.04 & 76.18 & 79.93 & 78.05 & 75.97 & 79.80 & 80.58 & 80.81 & 79.29 \\
\bottomrule
\end{tabular}
\end{center}
\end{scriptsize}
\end{table}

\clearpage
\section{Feature Dimension Study with Independent RMDS}
\begin{figure*}[h]
    \centering
    \includegraphics[width=6.5in]{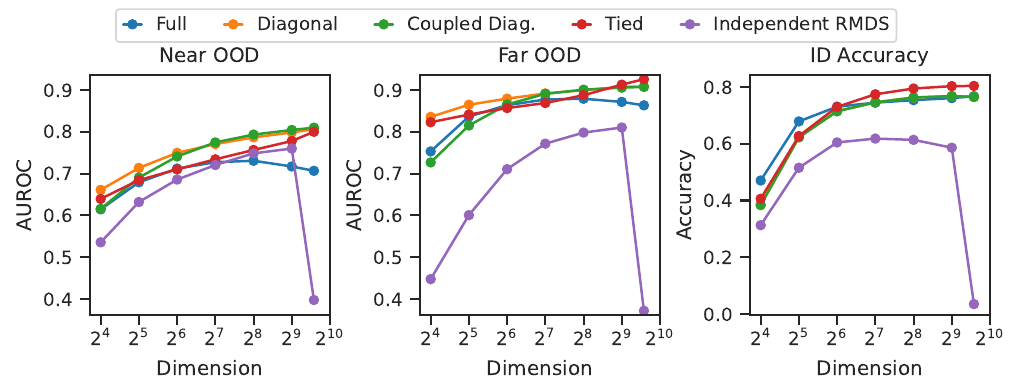}
    \caption{Performance on ``near OOD'', ``far OOD'', and in-distribution classification as a function of the feature dimension on the Imagenet-1K task. We projected the 768-dimensional ViT-B-16 features into lower dimensions using PCA, then projected into the eigenspace of the average within-class covariance. We compared the tied model (with full covariance) to the hierarchical model with full, diagonal, and coupled diagonal covariance and measured performance by area under the receiver operator curve (AUROC).}
    \label{fig:autowhiten-study-irmds}
\end{figure*}

\end{document}